\documentclass[11pt]{article}

\usepackage{epsfig,amsmath,latexsym,amssymb}
\usepackage{graphicx}
\usepackage{lscape}
\usepackage{picture, eso-pic, tikz} 
\usepackage{dsfont}
\usepackage{natbib}
\usepackage{listings}
\usepackage{changes}
\usepackage{hyperref}
\usepackage{subcaption}
\usepackage{booktabs}

\usepackage{ulem}

\usepackage{pgfplots}
\pgfplotsset{compat=1.18} 
\usepackage{pgfplotstable}
\usepgfplotslibrary{groupplots}
\pgfplotsset{every tick label/.append style={font=\small}}
\usepgfplotslibrary{fillbetween}
\usepackage{subcaption}

\usepackage{tikzviolinplots}
\usepackage{minted}
\usepackage{scontents}

\def\R{{\mathbb R}}

\def\E{{\mathbb E}}

\def\Var{\operatorname{Var}}
\def\Beweis{\footnotesize}

\newcommand{\Remm}[1]{}
\newtheorem{theo}{Theorem}[section]

\newtheorem{proposition}[theo]{Proposition}

\newtheorem{cor}[theo]{Corollary}

\newtheorem{model ass}[theo]{Model Assumptions}

\newtheorem{example}[theo]{Example}

\makeatletter
\def\thm@space@setup{
  \thm@preskip=8pt plus 2pt minus 4pt
  \thm@postskip=\thm@preskip
}
\makeatother

\def\EndProof{\hfill {\scriptsize $\Box$}}

\numberwithin{equation}{section}

\definecolor{MyGray}{rgb}{0.92,0.92,0.92}


\definecolor{British racing}{rgb}{0.0, 0.5, 0.0}
\def\bx{\boldsymbol{x}}

\def\be{\boldsymbol{e}}

\def\bw{\boldsymbol{w}}

\def\bY{\boldsymbol{Y}}
\def\bX{\boldsymbol{X}}
\def\b0{\boldsymbol{0}}

\def\bbeta{\boldsymbol{\beta}}
\def\balpha{\boldsymbol{\alpha}}

\def\b0{\boldsymbol{0}}

\def\bff{\boldsymbol{f}}

\def\br{\boldsymbol{r}}

\def\bpi{\boldsymbol{\pi}}

\title{Insurance Pricing Optimization via Off-Policy Evaluation}
\author{Sascha G\"unther\footnote{Department of Mathematics, ETH Zurich, sascha.guenther@math.ethz.ch}
\and Dimitri Semenovich\footnote{dvsemenovich@gmail.com}
\and Mario V.~W\"uthrich\footnote{Department of Mathematics, ETH Zurich, mario.wuethrich@math.ethz.ch}}

\date{\today}
\begin{document}

\maketitle



\begin{abstract}
    Traditional insurance pricing relies on risk-based principles that ensure actuarial fairness and solvency but do not explicitly account for policyholders' price sensitivity. We formulate insurance pricing as a decision-making problem and study it using tools from off-policy evaluation and stochastic control. We propose a kernelized inverse propensity score estimator that exploits local structure in the action space and yields variance reduction compared to the classical inverse propensity score estimator. Building on these value estimates, we investigate policy optimization and present two practical approaches for computing optimal pricing rules: an interpretable data-shared Lasso formulation and a flexible policy parameterization based on neural networks.
    Using a controlled synthetic travel insurance environment, we empirically confirm the theoretical results and show that neural networks outperform existing techniques for policy optimization.
\end{abstract}

\section{Introduction}
Pricing is a central task in actuarial science. Traditionally, insurance prices are derived from expected values of future claims and expenses, adjusted by risk loadings and profit margins. This approach is fully risk-based, i.e., it focuses on covering expected costs rather than explicitly modeling the policyholder’s willingness to pay. While this framework ensures solvency and fairness, it may overlook important behavioral aspects of demand and pricing strategy optimization.

In recent years, dynamic and data-driven pricing approaches have gained attention in many industries. Firms in e-commerce, transportation, and energy sectors increasingly use reinforcement learning and counterfactual inference to optimize pricing decisions under uncertainty. These methods allow for learning from historical data collected under previous, possibly suboptimal, pricing policies, and estimating the performance of new pricing strategies without requiring new experiments. In actuarial science, these ideas have so far received limited attention. The dominant thread remains risk-based ratemaking relying on GLMs, GAMs, and other statistical methods. However, some articles do consider optimal or dynamic pricing when demand reacts to price. \cite{Krikler2004} describe an optimal pricing procedure that aims to maximize revenue and relies on demand estimations. \cite{Emms_Haberman_2005} use optimal control with demand modeling and show how bang-bang strategies can arise unless the pricing rule is smoothed by contract accrual. \cite{Emms2007} extends this to a stochastic market-average premium and solves the resulting Bellman equation to obtain dynamic premium strategies sensitive to loss-ratio dynamics. These papers explicitly integrate demand into the objective rather than only applying risk loadings to expected claims. Earlier, \cite{Moriconi1982} already argued that premium principles should account for market conditions, which is reflected in modern price-sensitive frameworks. On the industry side, \cite{Guven_McPhail_2013} popularized elasticity-based adjustments in property and casualty pricing, documenting industry interest.
Recent work frames insurance pricing as a contextual bandit or reinforcement learning problem. \cite{Krasheninnikova2019} use reinforcement learning techniques to maximize customer lifetime value. \cite{Treetanthiploet2023} optimize prices offered on price-comparison websites, while \cite{Young2024} use similar techniques to achieve a desired target portfolio in terms of a composition of risk factors such as age, location, and occupation.

A widely used approach in contextual pricing first estimates a demand function from historical data and then uses this estimate to determine profit-maximizing prices \citep[e.g.,][]{Ferreira2016, Baardman2023, Alley2023, Biggs2021, Chen2022}. In this framework, demand is modeled as the probability that a customer purchases a product as a function of the offered price and customer or product characteristics. An estimator of expected revenue is then constructed as the product of the price and the estimated purchase probability. Prices are ultimately chosen to maximize this estimated revenue. \cite{Elmachtoub2022} refer to this approach as \textit{predict-then-optimize}. The performance of this approach heavily depends on the accuracy of the estimated demand model. In practice, estimates are often biased due to model misspecification or selection effects induced by historical pricing policies \citep[see][]{Jagabathula2017, Semenovich2019}. This bias can propagate through the optimization step, leading to systematically suboptimal pricing decisions and inflated estimates of revenue. To mitigate these issues, \cite{Ye2018} and \cite{Biggs2022} replace the revenue objective with a tractable surrogate loss that directly targets pricing decisions. However, their approach is typically not based on unbiased estimates.

Outside of actuarial science, policy evaluation from historical data has been studied extensively.
This problem, known as off-policy evaluation, is central to reinforcement learning, where one seeks to estimate the expected reward of a policy $\pi$ using data generated under a potentially different policy $\widetilde{\pi}$. 
Early statistical approaches rely on inverse propensity weighting, tracing back to the estimator introduced by \cite{HorvitzThompson}, which provides unbiased but potentially high-variance estimates. 
\cite{Dudik} propose a doubly robust estimator that combines direct reward modeling with importance weighting to improve variance and robustness.
Subsequent work has refined these estimators to reduce the variance and to improve the finite-sample performance, for instance by employing a bias-variance tradeoff in \cite{Thomas2016} or by extending it to sequential decision-making problems in \cite{Jiang2016}.

This article transfers these techniques by proposing a general framework for optimal insurance pricing as a stochastic control problem. Each pricing decision is viewed as an action $A$ taken on a policyholder with characteristics $\bX$, leading to a reward $R$, for example, revenue or profit. The objective is to find a pricing policy $ \pi $ that maximizes the expected reward $ V(\pi) $. In doing so, we apply techniques from off-policy evaluation and policy optimization, which allow us to estimate and improve pricing policies from historical insurance data collected under past pricing schemes.

This paper makes four main contributions. First, while reinforcement learning approaches to insurance pricing have recently been proposed in dynamic and online settings, we formulate demand-sensitive insurance pricing on historical data as an off-policy control problem. A key practical advantage of this offline approach is the strict separation it enables between data collection and decision-making. During an experimentation phase, the insurer collects data under a known randomised policy $\widetilde{\pi}$. Once this data is gathered, off-policy estimators allow the insurer to evaluate an effectively unlimited number of candidate pricing strategies by reusing the same experimental data, i.e., each counterfactual policy evaluation requires no additional data collection. This makes the framework particularly attractive in insurance settings where experimentation is costly or operationally constrained. Moreover, because the estimators are computed offline, the insurer can conduct detailed analysis, compare alternative strategies, and validate results before committing to a new pricing rule in production.
Second, we introduce a kernelized inverse propensity score estimator that exploits local structure in the action space. The estimator smooths information across neighboring actions and allows policies defined on new action spaces to be evaluated using historical data. Under a mild regression assumption in the action variable, we establish unbiasedness of the estimator and derive variance bounds relative to the classical inverse propensity score estimator.
Third, we characterize the variance-optimal kernel matrix and show that this construction strictly improves upon the classical inverse propensity score estimator. We also propose a computationally efficient kernel construction that avoids estimating conditional reward moments while achieving comparable empirical performance.
Finally, we demonstrate how the proposed evaluation framework can be combined with both interpretable (data-shared Lasso) and flexible (neural network) policy parameterizations, and we compare these approaches to the classical predict-then-optimize method in a controlled synthetic insurance environment. In this setting, the kernelized inverse propensity score estimator substantially reduces variance compared to the classical inverse propensity score estimator, while the computationally efficient kernel construction performs nearly as well as the variance-optimal variant. In the policy optimization task, flexible neural network policies can achieve higher policy values, though at the cost of increased variability across simulations.

The remainder of the paper is structured as follows. In Section~\ref{sec:problem_formulation}, we introduce the stochastic control framework for insurance pricing and define the value of a pricing policy. Section~\ref{sec:estimation} discusses classical off-policy evaluation methods, namely the direct method and the inverse propensity score estimator. In Section~\ref{sec:kernelized_controls}, we extend these ideas by introducing kernelized controls, and we analyze their statistical properties, including their bias and variance behavior under a local regression assumption in the action variable. We turn to the optimization problem in Section~\ref{sec:policy_optimization} and present two approaches for computing optimal pricing policies, based on data-shared Lasso and neural network parameterizations. Finally, Section~\ref{sec:numerical_part} empirically verifies the theoretical results, and illustrates the proposed optimization methods in a synthetic travel insurance pricing environment and compares them to the classical predict-then-optimize approach.

\section{Problem formulation} \label{sec:problem_formulation}
Let $\bX \in {\cal X} \subseteq \R^p$ denote the features of the insurance policyholders, $A \in {\cal A}$ the actions taken (insurance prices charged), and $R$ the rewards (e.g., revenue) generated. We assume a finite action space ${\cal A}$ with $d$ levels.

\medskip

A stochastic control (policy) is given by a mapping
\begin{equation}\label{stochastic policy}
    \pi : {\cal A} \times {\cal X}\to [0,1]\,,
    \qquad (a,\bx) \mapsto \pi(a\mid\bx)\,,
\end{equation}
where $\pi(\cdot\mid\bx)$ is a probability distribution on the finite action space ${\cal A}$ for all $\bx \in {\cal X}$.
A selected policy $\pi$ has a value (average expected reward) of
\begin{equation}\label{value definition}
  V(\pi) = \E_{\pi} \left[ R \right] =\E \Big[ \E_{\pi}\left[ R \mid\bX \right] \Big]
  =\E \left[\, \sum_{a \in {\cal A}} \E\left[R \mid \bX, A=a \right] \pi(a\mid\bX) \right]\,.
\end{equation}
In an insurance pricing context, $V(\pi)$ represents the expected revenue obtained when applying the pricing rule $\pi$ across the portfolio.

The optimal controls are given by (subject to existence)
\begin{equation}\label{optimal control}
    \pi^* ~\in~ \underset{\pi}{\arg\max}~V(\pi)\,.
\end{equation}
This leads to two tasks: (i) computation/estimation of the value $V(\pi)$ under an arbitrary policy $\pi$, and (ii) finding an optimal control $\pi^*$. In Sections~\ref{sec:estimation} and~\ref{sec:kernelized_controls}, we will focus on the former before we turn to the latter in Section~\ref{sec:policy_optimization}.

\section{Estimation of value}\label{sec:estimation}
To compute (or estimate) the optimal control \eqref{optimal control}, we need to be able to evaluate (or approximate) the value $V(\pi)$ for any possible policy $\pi$. A main issue in solving this problem is that, per policyholder, typically, there is only data available from very few or even only one single policy. Let there be a tuple of features, action, and reward $(\bX_i, A_i, R_i)$ generated under given policies $\widetilde{\pi}_i$ for $i \in \{1, \dots, n\}$. We assume that
\begin{enumerate}
    \item[(i)] the tuples $(\bX_i, A_i, R_i)$,  $1\leq i\leq n$, are independent,
    \item[(ii)] the features of the policyholders $\bX_i$, $1\leq i\leq n$, are i.i.d.,
    \item[(iii)] given the features $\bX_i$, the distribution of the action $A_i$ is determined by the given policy $\widetilde{\pi}_i$, $1\leq i \leq n$,
    \item[(iv)] for any vector of features $\bx \in \mathcal{X}$ and any action $a \in \mathcal{A}$, the conditional distribution of the reward $R_i$, given $\bX_i = \bx$ and $A_i = a$, does not depend on the policyholder $i$, i.e.,
    \begin{equation*}
        R_i \mid_{\bX_i = \bx,\, A_i = a} \;\overset{d}{=}\; R_j \mid_{\bX_j = \bx,\, A_j = a} \quad \text{for all } i,j \in \{1, \dots, n\}\,.
    \end{equation*}
\end{enumerate}

We denote the collection of all (possibly different) past policies as $\widetilde{\bpi} = (\widetilde{\pi}_i)_{i=1}^n$ and let
\begin{equation}\label{observations assumed}
{\cal L}_{\widetilde{\bpi}}=\left( \bX_i, A_i, R_i\right)_{i=1}^n\,,
\end{equation}
be the observed learning sample.\\
Note that the policy $\widetilde{\pi}_i$ used to generate the learning sample may change from policyholder to policyholder, and thus, we can combine information collected under different past policies. The dependence of the policy $\widetilde{\pi}_i$ on the policyholder $i$, $1\leq i \leq n$, can either be interpreted as a simultaneous test of different policies to different policyholders or as the policy changing over time. This way, different policies can be applied to policyholders with identical features.

We present two different ways of off-policy learning to estimate $V(\pi)$ for an arbitrary policy $\pi$; see \cite{Dudik}. We revisit these two methods.

\bigskip

{\it (1) Direct method.} We approximate the value \eqref{value definition} by its empirical version
\begin{equation} \label{eq:empirical_value}
  \widehat{V}_n(\pi) = \frac{1}{n}\sum_{i=1}^n \sum_{a \in {\cal A}}\E\left[R \mid \bX = \bX_i, A=a \right] \pi(a\mid\bX_i)\,.
\end{equation}
In order to evaluate this empirical value, we need to compute (or estimate) the conditional expected return. For this, we set up a regression model
\begin{equation}\label{regression direct model}
  (\bx, a) ~\mapsto~ \varrho(\bx,a)=\E\left[R\mid \bX = \bx, A=a \right],
\end{equation}
and based on the learning sample ${\cal L}_{\widetilde{\bpi}}$, we can infer this regression model, resulting in an estimated regression function $(\bx, a)\mapsto \widehat{\varrho}(\bx,a)$ for all $\bx \in \mathcal{X}$ and $a\in \mathcal{A}$. In particular, this is justified by item (iv) of our assumptions above.

The {\it direct method} (DM) yields the following estimator for the policy value \eqref{value definition}:
\begin{equation}\label{DM estimate}
\widehat{V}_{\rm DM}(\pi) = \frac{1}{n}\sum_{i=1}^n \sum_{a \in {\cal A}}\widehat{\varrho}\left(\bX_i, a\right)\pi\left(a \mid \bX_i\right)\,.
\end{equation}
Note that the past policies $\widetilde{\pi}_i$, the observed rewards $R_i$ and the actions $A_i$, $1\leq i\leq n$, do not directly enter equation \eqref{DM estimate}, only implicitly via the sample $\mathcal{L}_{\widetilde{\bpi}}$ that was used to fit the regression model. A disadvantage of the direct method is that it requires specifying an explicit regression model class for \eqref{regression direct model} to solve the estimation problem. 

\bigskip

{\it (2) Inverse propensity score method.}
The second method uses inverse propensity scores, which
go back to \cite{HorvitzThompson}; in statistics and Bayesian modeling, this technique is also known as {\it importance weighting}. For a generic policy $\widetilde{\pi}$, we rewrite the value defined in \eqref{value definition} as follows 
\begin{eqnarray}
  \nonumber V(\pi)  &=&\E \left[\,\sum_{a \in {\cal A}} \E\left[R \mid \bX, A=a \right]\, \frac{\pi(a\mid\bX)}{\widetilde{\pi}(a\mid\bX)}\,\widetilde{\pi}(a\mid\bX) \right]  
  \\&=&\E \left[ \E_{\widetilde{\pi}}\left[R ~\frac{\pi(A\mid\bX)}{\widetilde{\pi}(A\mid\bX)} \,\middle|\, \bX \right] \right]~=~ \E_{\widetilde{\pi}} \left[ R ~\frac{\pi(A\mid\bX)}{\widetilde{\pi}(A\mid\bX)}\right]\,. \label{eq:importance_weighting}
\end{eqnarray}
The latter quantity is an expected value under an observed policy $\widetilde{\pi}$. Thus, we can use the learning sample ${\cal L}_{\widetilde{\bpi}}$ to estimate it empirically by the {\it inverse propensity score} (IPS) estimate
\begin{equation}\label{IPS estimate}
\widehat{V}_{\rm IPS}(\pi) = \frac{1}{n}\sum_{i=1}^n  R_i~\frac{\pi(A_i\mid\bX_i)}{\widetilde{\pi}_i(A_i\mid\bX_i)}\,.
\end{equation}
The advantage of the IPS estimate is that it is model-free, i.e., we do not need any model assumptions beyond having a learning sample ${\cal L}_{\widetilde{\bpi}}$. The disadvantage is that this estimator can have a high uncertainty through the variance of the importance weights $\pi(A_i\mid\bX_i)/\widetilde{\pi}_i(A_i\mid\bX_i)$. In the transformation \eqref{eq:importance_weighting} and the estimation in \eqref{IPS estimate}, we implicitly assume sufficient overlap between the policies $\widetilde{\pi}_i$ and $\pi$, meaning that $\widetilde{\pi}_i(a \mid \bx) > 0$ whenever $\pi(a \mid \bx) > 0$, $a\in\mathcal{A}$, $\bx\in\mathcal{X}$, so that the importance weights are well-defined.

\bigskip

The above has been formulated for a stochastic control
\eqref{stochastic policy}. In the following we turn our attention to {\it deterministic
controls} $\pi :{\cal X} \to {\cal A}$ with $a=\pi(\bx)\in {\cal A}$. That is, by a slight abuse of notation, we assume that $\pi$ maps every policyholder's characteristics $\bx$ (deterministically) to an action $a=\pi(\bx) \in {\cal A}$.

For a deterministic policy $\pi$, the IPS estimate is given by
\begin{equation}\label{IPS estimate 2}
    \widehat{V}_{\rm IPS}(\pi) 
    = \frac{1}{n}\sum_{i=1}^n R_i\, \frac{\mathds{1}_{\{A_i=\pi(\bX_i)\}}}{\widetilde{\pi}_i(A_i\mid\bX_i)}
    = \frac{1}{n}\sum_{i=1}^n R_i\, \frac{\langle\be_{A_i}, \be_{\pi(\bX_i)}\rangle}{\widetilde{\pi}_i(A_i\mid\bX_i)},
\end{equation}
with a one-hot encoded vector
\begin{eqnarray*}
\be :{\cal A}=\{a^{(1)}, \ldots,a^{(d)}\} &\to& \{0,1\}^d
\\ 
a &\mapsto& \be_a=\be(a)=\left( \mathds{1}_{\{a=a^{(1)}\}}, \ldots, 
 \mathds{1}_{\{a=a^{(d)}\}}\right)^\top.
 \end{eqnarray*}

\begin{proposition}\label{prop:unbiased_IPS}
The IPS estimator \eqref{IPS estimate 2} is unbiased for the value
$V(\pi)$.
\end{proposition}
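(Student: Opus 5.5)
By linearity of expectation it suffices to show that each summand of \eqref{IPS estimate 2} has expectation $V(\pi)$. Then
\begin{equation*}
\E\big[\widehat{V}_{\rm IPS}(\pi)\big]=\frac{1}{n}\sum_{i=1}^n V(\pi)=V(\pi).
\end{equation*}
Fix $i$. The natural way to compute $\E\big[R_i\,\mathds{1}_{\{A_i=\pi(\bX_i)\}}/\widetilde{\pi}_i(A_i\mid\bX_i)\big]$ is to condition successively on $\bX_i$ and then on $(\bX_i,A_i)$, using the tower property. This mirrors the derivation in \eqref{eq:importance_weighting}, now with a policyholder-specific logging policy $\widetilde{\pi}_i$ and a deterministic target policy.

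First condition on $(\bX_i,A_i)$. The weight $\mathds{1}_{\{A_i=\pi(\bX_i)\}}/\widetilde{\pi}_i(A_i\mid\bX_i)$ is $(\bX_i,A_i)$-measurable, so
\begin{equation*}
\E\left[R_i\,\frac{\mathds{1}_{\{A_i=\pi(\bX_i)\}}}{\widetilde{\pi}_i(A_i\mid\bX_i)}\,\middle|\,\bX_i,A_i\right]=\E[R_i\mid\bX_i,A_i]\,\frac{\mathds{1}_{\{A_i=\pi(\bX_i)\}}}{\widetilde{\pi}_i(A_i\mid\bX_i)}.
\end{equation*}
By assumption (iv), $\E[R_i\mid\bX_i=\bx,A_i=a]=\varrho(\bx,a)$, the regression function \eqref{regression direct model}, which does not depend on $i$.

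Next condition on $\bX_i$. By assumption (iii), given $\bX_i$ the action $A_i$ has law $\widetilde{\pi}_i(\cdot\mid\bX_i)$. Summing over the finite set ${\cal A}$ gives
\begin{equation*}
\sum_{a\in{\cal A}}\varrho(\bX_i,a)\,\frac{\mathds{1}_{\{a=\pi(\bX_i)\}}}{\widetilde{\pi}_i(a\mid\bX_i)}\,\widetilde{\pi}_i(a\mid\bX_i)=\varrho\big(\bX_i,\pi(\bX_i)\big).
\end{equation*}
The propensities cancel. This is the only delicate point: it requires the overlap condition $\widetilde{\pi}_i(\pi(\bx)\mid\bx)>0$, stated after \eqref{IPS estimate}. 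Terms with $a\neq\pi(\bX_i)$ vanish because of the indicator. For $a$ with $\widetilde{\pi}_i(a\mid\bX_i)=0$ the action occurs with probability zero, so those terms contribute nothing.

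Finally, by assumption (ii) the $\bX_i$ have the same law as $\bX$. Hence
\begin{equation*}
\E\big[\varrho(\bX_i,\pi(\bX_i))\big]=\E\big[\varrho(\bX,\pi(\bX))\big],
\end{equation*}
which is exactly \eqref{value definition} for the deterministic policy, i.e.\ with $\pi(a\mid\bx)=\mathds{1}_{\{a=\pi(\bx)\}}$. Independence (i) is not needed for unbiasedness; it would only matter for the variance. Integrability of $R_i$ is implicitly assumed so that all conditional expectations are well defined.
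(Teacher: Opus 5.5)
Your proof is correct and follows essentially the same route as the paper, whose proof simply invokes the importance-weighting identity \eqref{eq:importance_weighting}. You write that identity out summand by summand, for the policyholder-specific logging policies $\widetilde{\pi}_i$ and a deterministic target, which makes explicit where assumptions (ii)--(iv) and the overlap condition $\widetilde{\pi}_i(\pi(\bx)\mid\bx)>0$ enter.
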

{\Beweis
{\bf Proof.} This immediately follows from \eqref{eq:importance_weighting}.
\EndProof}

\section{Kernelized controls} \label{sec:kernelized_controls}

The above derivations were based on the assumption that both the observed policies  $\widetilde{\bpi}$ (used to collect the learning sample ${\cal L}_{\widetilde{\bpi}}$)  and the new policy $\pi$ of interest act on the same action space ${\cal A}$.  In addition, the IPS estimator \eqref{IPS estimate 2} relies only on observations for which the realized action matches the action prescribed by the new policy.  In practice, this can lead to high variance in the estimator, since much of the available data is discarded.

The following two considerations motivate extending this framework.

First, when considering pricing policies for future insurance contracts, the set of admissible prices may differ from those used historically. In that case, the new policy may act on a different action space $\bar{\mathcal{A}} = \{\bar{a}^{(1)}, \ldots, \bar{a}^{(m)}\}$ with $m$ levels. Therefore, evaluating such policies requires a mechanism to translate information from the historical action space $\mathcal{A}$ to the new action space $\bar{\mathcal{A}}$. This situation arises naturally when insurers wish to evaluate pricing adjustments that were not previously observed in the data, see Section~\ref{sec:extrapolation}.

Second, even when the action spaces coincide, it is desirable to exploit the structure of the action variable. In many pricing settings,  expected rewards vary smoothly with the price adjustment. Using this structure allows us to aggregate information across nearby actions and thereby reduce the variability of IPS estimates.

To address both issues simultaneously, we introduce kernel matrices  $\mathbf{K}(\bx) \in \mathbb{R}^{d \times m}$ that map between the two action spaces and smooth over nearby actions. Moreover, denote by $\bar{\be}_{\bar{a}} \in \{0,1\}^m$ the one-hot encoding vector corresponding to an action $\bar{a} \in \bar{\cal A}$.

We now exploit a deterministic policy $\bar{\pi}:{\cal X} \to \bar{\cal A}$ on this new action space $\bar{\cal A}$, and we set for the kernelized functional
\begin{equation*}
    \bar{\pi}_{\mathbf{K}}(a\mid\bx) = 
    \left\langle\be_{a}, \mathbf{K}(\bx)\, \bar{\be}_{\bar{\pi}(\bx)}\right\rangle
    =\left\langle \mathbf{K}(\bx)^\top
    \be_{a}, \,  \bar{\be}_{\bar{\pi}(\bx)}\right\rangle ~\in~ \R\,.
\end{equation*}
This can be interpreted as connecting the new actions $\bar{a} \in \bar{\cal A}$ to the old action space ${\cal A}$, e.g., action $\bar{a}^{(k)}$, $1\le k \le m$, provides a kernelized version on the original action space ${\cal A}$
\begin{equation*}
    \mathbf{K}(\bx) \be_{\bar{a}^{(k)}}
    = \left(\mathbf{K}_{1,k}(\bx), \ldots, 
    \mathbf{K}_{d,k}(\bx)
    \right)^\top~\in ~\R^d\,.
\end{equation*}
This structure is quite similar to the stochastic control \eqref{stochastic policy}. In particular, for our choice in Section~\ref{sec:kernel_construction}, the column sums of $\mathbf{K}(\bx)$ will aggregate to one, see Corollary~\ref{cor 1}, below. However, our choice of $\mathbf{K}(\bx)$ will not provide a probability tensor because the entries of $\mathbf{K}(\bx)$ will not necessarily be positive.

The {\it kernelized inverse propensity score} estimate can then be defined by
\begin{equation}\label{kernelized IPS}
    \widehat{V}_{\rm K}(\bar{\pi}) 
    = \frac{1}{n}\sum_{i=1}^n R_i\, \frac{\left\langle\be_{A_i},\, \mathbf{K}_i(\bX_i)\, \bar{\be}_{\bar{\pi}(\bX_i)}\right\rangle }{\widetilde{\pi}_i(A_i\mid\bX_i)}
    = \frac{1}{n}\sum_{i=1}^n R_i\, \frac{\bar{\pi}_{\mathbf{K}_i}(A_i\mid\bX_i)}{\widetilde{\pi}_i(A_i\mid\bX_i)}\,,
\end{equation}
where $\mathbf{K}_i$ may depend on the observed policy $\widetilde{\pi}_i$, $1\leq i \leq n$ used to generate observation $i$ in the learning sample ${\cal L}_{\widetilde{\bpi}}$. We will justify this choice and return to this formula in \eqref{kernelized IPS2} below.

\subsection{Kernel construction}\label{sec:kernel_construction}

Remark that the kernelized IPS estimate \eqref{kernelized IPS} is model-free. However, in the next step, we introduce a model to construct the kernel matrix $\mathbf{K}(\bx)$, for a given observed policy $\widetilde{\pi}$. Let us recall the regression function \eqref{regression direct model}, regressing the rewards $R$ from features $\bx \in \mathcal{X}$ and actions $a\in\mathcal{A}$. Similar to the direct method, we make the following regression assumption in the actions, for fixed $\bx$,
\begin{equation}\label{linear regression ansatz}
    a~\mapsto~\varrho(\bx, a) =  \beta_0(\bx) + \sum_{j=1}^q \beta_j(\bx) f_j(a) =   \bff(a)^\top\bbeta(\bx) \,,
\end{equation}
with regression parameter $\bbeta(\bx)= (\beta_0(\bx), \beta_1(\bx),\ldots,\beta_q(\bx))^\top \in \R^{q+1}$ and a vector of basis functions
$\bff(a) = (1, f_1(a),\ldots,f_q(a))^\top \in \R^{q+1}$ for
smooth functions $f_j : \mathbb{R} \to \mathbb{R}$, for \break $j=1,\ldots,q$. In contrast to the direct method, we do not specify a regression relation between the policyholder's features $\bx$ and the expected rewards $\varrho(\bx, a)$ in \eqref{linear regression ansatz}. Specifying a regression in the actions $a\in \mathcal{A}$ allows us to smooth over different actions and extrapolate to the new action space $\bar{\mathcal{A}}$. Thus, we implicitly extend the regression assumption \eqref{linear regression ansatz} to the actions $\bar{a} \in \bar{\mathcal{A}}$ on the new action space.\\
Choosing $q = 1$ and $f_1(a) = a$ simplifies to a linear regression in $a$. In this case, equation \eqref{linear regression ansatz} means that actions impact the expected rewards linearly, for given policyholders $\bx \in {\cal X}$. If this is not the case, one should use a different set of smooth functions. The regression assumption \eqref{linear regression ansatz} even accommodates a simple additive model on the action space $\mathcal{A}$ (with a linear link function).\\
Unlike in the direct method, we do not use the regression model \eqref{linear regression ansatz} to estimate the values $V(\pi)$ and $V(\bar{\pi})$, respectively, but we use it to determine the kernel matrix $\mathbf{K}(\bx)$. To this end, we need to specify the linear regression parameter $\bbeta(\bx) \in \R^{q+1}$ for any $\bx \in {\cal X}$.
Assume that for each action $a^{(k)} \in {\cal A}$, $1\le k \le d$, there is a reward $r_k(\bx) \in \R$, for given $\bx$. This gives the reward vector $\br(\bx)=(r_1(\bx),\ldots, r_d(\bx))^\top \in \R^d$ for given $\bx$.
This allows us to consider the weighted least squares problem to find the regression parameter $\widehat{\bbeta}(\bx)$,
\begin{eqnarray} \label{eq:weighted_least_squares}
    \widehat{\bbeta}(\bx)&\in&
    \underset{ \bbeta}{\arg\min}\,
    \left(\br(\bx) - D  \bbeta \right)^\top W(\bx) \left(\br(\bx) - D  \bbeta \right)\,,
\end{eqnarray}
with a symmetric positive-definite weight matrix $W \in \R^{d\times d}$ and design matrix $D$ given by
\begin{equation*}
D=\begin{pmatrix}1 & f_1(a^{(1)}) & \cdots & f_q(a^{(1)}) \\ \vdots & \vdots & \ddots & \vdots \\1 & f_1(a^{(d)}) & \cdots & f_q(a^{(d)}) \end{pmatrix}~\in~ \R^{d \times (q+1)}\,.
\end{equation*}
We assume that $D$ has full column rank $q+1$. Choosing
\begin{equation}\label{eq:diagonal_weight_matrix}
    W(\bx)={\rm diag}\left(\widetilde{\pi}(a^{(k)}\mid\bx)\right)_{k=1}^d\,,
\end{equation}
results in the standard weighted least squares formulation, minimizing the objective function
\begin{equation*}
    \sum_{k=1}^d \widetilde{\pi}(a^{(k)}\mid\bx)\left(r_k(\bx) - \bff(a^{(k)})^\top\, \bbeta \right)^2\,.
\end{equation*}
Thus, when the learning sample $\mathcal{L}_{\widetilde{\bpi}}$ contains observations generated under different observed policies $\widetilde{\pi}_i$, $1\leq i\leq n$, the corresponding weight matrices $W_i(\bx)$ may also differ across observations.
The above choice \eqref{eq:diagonal_weight_matrix} of the diagonal weight matrix $W(\bx)$ is motivated by classical regression settings with independent observations. Here, however, the reward vector $\br(\bx)$ collects rewards of a \emph{single} individual under different actions, which are generally correlated. For this reason, alternative specifications of the weight matrix $W(\bx)$ may be more appropriate. We return to this question in Propositions~\ref{prop:variance_bound}--\ref{prop:cond_moments}, below, and we will refer to the kernel matrix corresponding to the choice \eqref{eq:diagonal_weight_matrix} as the \textit{naive kernel matrix} leading to a \textit{naive kernelized IPS estimator}.

\medskip

The weighted least squares problem \eqref{eq:weighted_least_squares} has a unique solution since the matrix $D$ has full rank and $W(\bx)$ is symmetric positive-definite. The solution is given by
\begin{equation*}
    \widehat{\bbeta}(\bx) =\left(D ^\top W(\bx)  D\right)^{-1} 
    D ^\top W(\bx) \, \br(\bx)  = D^+_{\bx}\, \br(\bx)\,,
\end{equation*}
where we defined the matrix
\begin{equation*}
    D^+_{\bx} = \left(D ^\top W(\bx)  D\right)^{-1} 
    D ^\top W(\bx) ~\in~\R^{(q+1) \times d}\,.
\end{equation*}
Assuming that the regression assumption \eqref{linear regression ansatz} extends to $\bar{\cal A}$, we estimate the conditionally expected rewards for fixed $\bx$ as
\begin{equation}\label{estimated reward linear}
    \begin{pmatrix}
    \widehat{\varrho}(\bx, \bar{a}^{(1)})\\\vdots\\\widehat{\varrho}(\bx, \bar{a}^{(m)})
    \end{pmatrix}
     =
     \begin{pmatrix}1&f_1(\bar{a}^{(1)}) & \cdots & f_q(\bar{a}^{(1)}) \\ \vdots & \vdots & \ddots & \vdots \\1& f_1(\bar{a}^{(m)}) & \cdots & f_q(\bar{a}^{(m)}) \end{pmatrix}\widehat{\bbeta}(\bx) 
     =  \bar{D}\,
    D^+_{\bx}\, \br(\bx)\,,
\end{equation}
where we defined the design matrix on the new action space $\bar{\mathcal{A}}$,
\begin{equation*}
    \bar{D} = \begin{pmatrix}1&f_1(\bar{a}^{(1)}) & \cdots & f_q(\bar{a}^{(1)}) \\ \vdots & \vdots & \ddots & \vdots \\1& f_1(\bar{a}^{(m)}) & \cdots & f_q(\bar{a}^{(m)}) \end{pmatrix} ~\in ~\R^{m \times (q+1)}\,.
\end{equation*}
This directly links to the wanted kernel matrix. 
Namely, we set 
\begin{align}
    \mathbf{K}(\bx) = (D^+_{\bx})^\top \bar{D}^\top = W(\bx) D \left(D ^\top W(\bx)  D\right)^{-1} \bar{D} ^\top ~\in ~\R^{d \times m}\,. \label{eq:kernel_matrix}
\end{align}
Taking an action $\bar{\pi}(\bx) \in \bar{\mathcal{A}}$, gives an estimated expected reward
$\widehat{\varrho}(\bx, \bar{\pi}(\bx))$. This estimated expected reward
is given by, see \eqref{estimated reward linear},
\begin{equation*}
    \widehat{\varrho}(\bx, \bar{\pi}(\bx))=
    \bar{\be}_{\bar{\pi}(\bx)}^\top 
     \mathbf{K}(\bx)^\top \br(\bx)
     =\left(\mathbf{K}(\bx)\bar{\be}_{\bar{\pi}(\bx)}\right)^\top
    \br(\bx)\,.
\end{equation*}
The final step is to consider action $A_i$ with corresponding reward $R_i$, for given $\bX_i$, $1\leq i\leq n$, from the learning sample ${\cal L}_{\widetilde{\bpi}}$. This leads to the {\it kernelized inverse propensity score} estimate
\begin{equation}\label{kernelized IPS2}
    \widehat{V}_{\rm K}(\bar{\pi}) 
    = \frac{1}{n}\sum_{i=1}^n R_i\, \frac{\left\langle\be_{A_i},\, \mathbf{K}_i(\bX_i)\, \bar{\be}_{\bar{\pi}(\bX_i)}\right\rangle }{\widetilde{\pi}_i(A_i\mid\bX_i)}
    = \frac{1}{n}\sum_{i=1}^n R_i\, \frac{\left\langle \mathbf{K}_i(\bX_i)^\top
    \be_{A_i}, \,  \bar{\be}_{\bar{\pi}(\bX_i)}\right\rangle }{\widetilde{\pi}_i(A_i\mid\bX_i)}\,,
\end{equation}
which is precisely as defined in \eqref{kernelized IPS}, and with the kernel matrices obtained from \eqref{eq:kernel_matrix}.

The following example illustrates the difference between the IPS estimator and its kernelized counterpart.

\begin{example}\label{ex:small_example} \normalfont
    In this example, we study the kernelized IPS estimator under the assumption that $\mathcal{A} = \bar{\mathcal{A}}$, i.e., we only want to study the effect of kernelization on the same action space.\\
    Assume that we observe three policyholders and, for simplicity, let them have the same features $\bx_1 = \bx_2 = \bx_3 \in \mathcal{X}$. Based on a default price of $100$, we offer each policyholder \mbox{$i\in \{1, 2, 3\}$} a premium reduction $A_i\in\mathcal{A} = \{10\%, 20\%, 30\%\}$ with uniform probability \mbox{$\widetilde{\pi}_i(a\mid\bx)=\frac{1}{3}$} for all $a\in \mathcal{A}$. Let the reward be defined as the total signed premium. Thus, if the policyholder does not accept the offer, the reward is 0, and $(1-A_i) \,100$ otherwise. We observe the following pairs of actions and rewards, which form the learning sample $\mathcal{L}_{\widetilde{\bpi}}$.
    \begin{center}
        \begin{tabular}{c|c|c}
             Policyholder $i$ & Action $A_i$ & Reward $R_i$ \\ \hline
             1 & 10\% & 90 \\
             2 & 20\% & 0\\
             3 & 30\% & 70
        \end{tabular}
    \end{center}
    We use this information to approximate the value of three new constant deterministic policies $\bar{\pi}^{(k)}(\bx) = a^{(k)} = 10k\,\%$, $1\leq k \leq 3$, for $a^{(k)} \in \bar{\mathcal{A}} = \mathcal{A}$ and for all $\bx \in \mathcal{X}$.\\
    Using the IPS estimator, we calculate
    \begin{align*}
        \widehat{V}_{\rm IPS}(\bar{\pi}^{(1)}) = \frac{1}{n}\sum_{i=1}^n R_i\, \frac{\mathds{1}_{\{A_i=\bar{\pi}^{(1)}(\bX_i)\}}}{\widetilde{\pi}_i(A_i\mid\bX_i)} = \frac{1}{3} \left( 90 \, \frac{1}{1/3} + 0 \,\frac{0}{1/3} + 70\,\frac{0}{1/3} \right) = 90\,,
    \end{align*}
    and equivalently $\widehat{V}_{\rm IPS}(\bar{\pi}^{(2)}) = 0$ and $\widehat{V}_{\rm IPS}(\bar{\pi}^{(3)}) = 70$.

    For the kernelized IPS estimator, we make a locally linear regression assumption 
    \begin{equation*}\label{eq:local_linear_regression}
        \varrho(\bx, a) = \beta_0(\bx) + \beta_1(\bx) \, a\,.
    \end{equation*}
    We choose the weight matrix $W(\bx) = W_i(\bx) ={\rm diag}\left(\widetilde{\pi}_i(a^{(k)}|\bx)\right)_{k=1}^d $, $1\leq i \leq 3$ and the design matrix $D$ follows as
    \begin{equation*}
        W(\bx) = \begin{pmatrix}
            \frac{1}{3} & 0 & 0\\
            0 & \frac{1}{3} & 0\\
            0 & 0 & \frac{1}{3}
        \end{pmatrix}\qquad  \text{and} \qquad D = \begin{pmatrix}
            1 & 0.1\\
            1 & 0.2\\
            1 & 0.3
        \end{pmatrix}\,.
    \end{equation*}
    Analogously to the weight matrix $W(\bx)$, the kernel matrix $\mathbf{K}(\bx)$ is identical for all policyholders because the observed policies $\widetilde{\pi}_i$, $1\leq i\leq 3$, are identical. Hence,
    \begin{equation*}
        \mathbf{K}(\bx) = W(\bx) D \left(D ^\top W(\bx)  D\right)^{-1} D ^\top
        \approx \begin{pmatrix}
            0.833 & 0.333 & -0.167\\
            0.333 & 0.333 & 0.333\\
            -0.167 & 0.333 & 0.833
        \end{pmatrix}\,.
    \end{equation*}
    Note that the columns of $\mathbf{K}(\bx)$ aggregate to one, but it is not a probability tensor. We use this to determine the kernelized IPS estimator
    \begin{align*}
        \widehat{V}_{\rm K}(\bar{\pi}^{(1)}) = \frac{1}{n}\sum_{i=1}^n R_i\, \frac{\left\langle\be_{A_i},\, \mathbf{K}(\bX_i)\, \bar{\be}_{\bar{\pi}^{(1)}(\bX_i)}\right\rangle }{\widetilde{\pi}_i(A_i\mid\bX_i)} = \frac{1}{3} \left( 90 \, \frac{0.833}{1/3} + 0 \,\frac{0.333}{1/3} - 70\,\frac{0.167}{1/3} \right) = 63.28\,,
    \end{align*}
    and equivalently $\widehat{V}_{\rm K}(\bar{\pi}^{(2)}) = 53.28$ and $\widehat{V}_{\rm K}(\bar{\pi}^{(3)}) = 43.28$\,.

    \begin{figure}[htbp!]
    	\centering
        \includegraphics[width=0.6\textwidth]{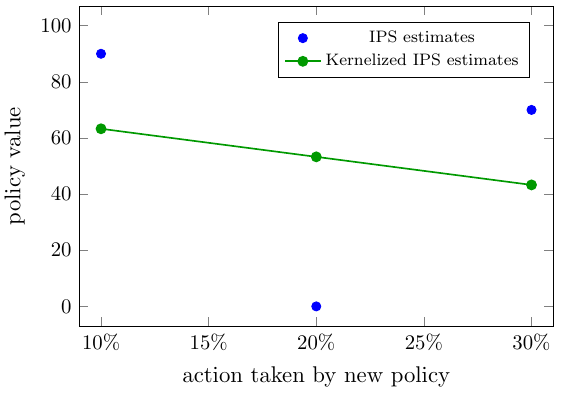}
        \caption{Kernelized and IPS estimates of policy value for constant policies under the setting in Example~\ref{ex:small_example}.}
        \label{fig:example_interpolation}
    \end{figure}

    The IPS estimator only relies on the information provided by observations where recorded actions match the action of the new policy. The kernelized IPS estimator, on the other hand, takes into account the rewards generated under different actions and smooths the rewards across the action space $\mathcal{A}$. The kernelized IPS estimates lie on the regression line obtained by fitting the locally linear model \eqref{eq:local_linear_regression} to the IPS estimates. This is shown in Figure~\ref{fig:example_interpolation}. From this illustration, it is intuitively clear that the kernelized IPS estimator can reduce the variance in estimation compared to the IPS estimator. In the next subsection, we formalize this observation.\\
    This completes the example.
\end{example}

\subsection{Statistical properties}

In this subsection, we analyze the statistical properties of the kernelized IPS estimator. In particular, we prove its unbiasedness under the local regression assumption~\eqref{linear regression ansatz} and compare its variance to that of the classical IPS estimator. The regression structure introduced in Section~\ref{sec:kernel_construction} is used to construct a variance-optimal kernel matrix, and it plays a key role in establishing these results.

The kernelized IPS estimator \eqref{kernelized IPS2} itself is model-free. However, we have assumed a regression structure \eqref{linear regression ansatz} in the actions $a \in \mathcal{A}$ to determine an explicit form of the kernel matrix $\mathbf{K}(\bx)$. This regression structure smooths over the action space and is used again to prove unbiasedness in the following Proposition.

\begin{proposition}\label{proposition 2}
Under the regression assumption \eqref{linear regression ansatz}, the kernelized IPS estimator \eqref{kernelized IPS2} is unbiased for the value $V(\bar{\pi})$.
\end{proposition}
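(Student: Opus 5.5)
The plan is to compute the expectation of a single summand of \eqref{kernelized IPS2}, conditional on $\bX_i$. The outer expectation over the i.i.d.\ features (assumption (ii)) then yields $V(\bar{\pi})=\E[\varrho(\bX,\bar{\pi}(\bX))]$, the deterministic-policy version of \eqref{value definition}. Averaging over $i$ finishes the argument, since each summand then has the same mean.

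Fix $i$ and condition on $\bX_i=\bx$. Write $\mathbf{K}=\mathbf{K}_i(\bx)$ and $W=W_i(\bx)={\rm diag}(\widetilde{\pi}_i(a^{(k)}\mid\bx))_k$. By assumption (iii), $A_i$ has law $\widetilde{\pi}_i(\cdot\mid\bx)$, and by (iv), $\E[R_i\mid \bX_i=\bx,A_i=a]=\varrho(\bx,a)$. Exactly as in \eqref{eq:importance_weighting}, the propensity in the denominator cancels the sampling probability, giving
\begin{equation*}
\E\left[R_i\,\frac{\langle\be_{A_i},\mathbf{K}\bar{\be}_{\bar{\pi}(\bx)}\rangle}{\widetilde{\pi}_i(A_i\mid\bx)}\,\middle|\,\bX_i=\bx\right]
=\sum_{k=1}^d \varrho(\bx,a^{(k)})\,\langle\be_{a^{(k)}},\mathbf{K}\bar{\be}_{\bar{\pi}(\bx)}\rangle
=\boldsymbol{\varrho}(\bx)^\top\mathbf{K}\,\bar{\be}_{\bar{\pi}(\bx)},
\end{equation*}
where $\boldsymbol{\varrho}(\bx)=(\varrho(\bx,a^{(1)}),\ldots,\varrho(\bx,a^{(d)}))^\top$. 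This step needs overlap, i.e.\ $\widetilde{\pi}_i>0$ on $\mathcal{A}$. That holds automatically here, because $W$ must be positive definite for $\mathbf{K}$ to be defined.

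Next insert the regression assumption \eqref{linear regression ansatz}, which gives $\boldsymbol{\varrho}(\bx)=D\bbeta(\bx)$. Using \eqref{eq:kernel_matrix} and the symmetry of $W$,
\begin{equation*}
\boldsymbol{\varrho}(\bx)^\top\mathbf{K}=\bbeta(\bx)^\top D^\top W D\,(D^\top W D)^{-1}\bar{D}^\top=\bbeta(\bx)^\top\bar{D}^\top .
\end{equation*}
Hence the conditional expectation equals $\bar{\be}_{\bar{\pi}(\bx)}^\top\bar{D}\bbeta(\bx)=\bff(\bar{\pi}(\bx))^\top\bbeta(\bx)=\varrho(\bx,\bar{\pi}(\bx))$. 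This uses that the regression assumption extends to $\bar{\mathcal{A}}$, as stipulated in Section~\ref{sec:kernel_construction}. The key point is that $\mathbf{K}^\top$ reproduces any vector in the column space of $D$. In other words, weighted least squares is exact on the model class, and this cancellation is the only nontrivial step.

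Finally, take expectation over $\bX_i$. Because the $\bX_i$ are i.i.d.\ with the distribution of $\bX$, each summand has mean $\E[\varrho(\bX,\bar{\pi}(\bX))]=V(\bar{\pi})$, and averaging gives unbiasedness. The argument also covers an arbitrary symmetric positive-definite $W_i(\bx)$, not just the diagonal choice. The only care needed is to treat $\mathbf{K}_i$ as a deterministic function of $\bx$ and $\widetilde{\pi}_i$, so that it does not depend on $R_i$ or $A_i$ and can be pulled out of the conditional expectation.
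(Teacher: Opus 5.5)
Your proof is correct and follows the paper's argument essentially step for step. You condition on the features, let the propensity cancel the sampling probability, substitute $D\bbeta(\bx)$ for the vector of conditional mean rewards, use the identity $D^\top\mathbf{K}(\bx)=\bar{D}^\top$ (Corollary~\ref{cor 1}), and then read off $\bff(\bar{\pi}(\bx))^\top\bbeta(\bx)=\varrho(\bx,\bar{\pi}(\bx))$ via the regression assumption extended to $\bar{\mathcal{A}}$.

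One small caveat: overlap is automatic only for the diagonal choice $W={\rm diag}(\widetilde{\pi}_i(a^{(k)}\mid\bx))_k$. When you extend the argument to an arbitrary symmetric positive-definite $W_i(\bx)$, you must assume $\widetilde{\pi}_i(a\mid\bx)>0$ for all $a\in\mathcal{A}$ separately, as the paper does implicitly.
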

{\Beweis
{\bf Proof.} In fact, we will show that every summand of the kernelized IPS estimator
\begin{equation*}
    \widehat{V}_{\rm K}(\bar{\pi}) 
    = \frac{1}{n}\sum_{i=1}^n R_i\, \frac{\left\langle\be_{A_i},\, \mathbf{K}_i(\bX_i)\, \bar{\be}_{\bar{\pi}(\bX_i)}\right\rangle }{\widetilde{\pi}_i(A_i\mid\bX_i)}\,,
\end{equation*}
is an unbiased estimator for $\E_{\bar{\pi}}[R_i]$. Consider a single tuple $(\bX, A, R)$, where the action $A$ follows some policy $\widetilde{\pi}$, leading to a kernel matrix $\mathbf{K}(\bX)$. We compute the expected value under the regression assumption \eqref{linear regression ansatz}
\begin{eqnarray*}
\E_{\widetilde{\pi}}\left[ R \frac{\left\langle\be_{A},\, \mathbf{K}(\bX)\, \bar{\be}_{\bar{\pi}(\bX)}\right\rangle }{\widetilde{\pi}(A\mid\bX)}
\right]
&=&\E_{\widetilde{\pi}}\left[ \E\left[ R \mid\bX, A \right] \frac{\left\langle\be_{A},\, \mathbf{K}(\bX)\, \bar{\be}_{\bar{\pi}(\bX)}\right\rangle }{\widetilde{\pi}(A\mid\bX)}
\right]
= \E_{\widetilde{\pi}}\left[ \bff(A)^\top\, \bbeta(\bX)\, \frac{\left\langle\be_{A},\, \mathbf{K}(\bX)\, \bar{\be}_{\bar{\pi}(\bX)}\right\rangle }{\widetilde{\pi}(A\mid\bX)}
\right]
\\&=& \E\left[\, \sum_{ a \in {\cal A}}
\bff(a)^\top\, \bbeta(\bX) \left\langle\be_{a},\, \mathbf{K}(\bX)\, \bar{\be}_{\bar{\pi}(\bX)}\right\rangle\right]
\\
&=& \E\left[\left( \sum_{ a \in {\cal A}}\bff(a)^\top\, \bbeta(\bX) \be_{a}^\top\right) \mathbf{K}(\bX)\, \bar{\be}_{\bar{\pi}(\bX)}\right]
\\&=& \E\left[\left(D \,\bbeta(\bX)\right)^\top \mathbf{K}(\bX)\, \bar{\be}_{\bar{\pi}(\bX)}\right]
= \E\left[\bbeta(\bX)^\top D^\top  \mathbf{K}(\bX)\, \bar{\be}_{\bar{\pi}(\bX)}\right]
.
\end{eqnarray*}
The crucial step now is the following simplification: we use the fact that $D^\top W D$ is symmetric,
\begin{eqnarray*}
D^\top  \mathbf{K}(\bX)
&=& D^\top(D^+_{\bX})^\top \bar{D}^\top
~=~ D^\top\left( \left(D ^\top W(\bX)  D\right)^{-1} 
D ^\top W(\bX)\right)^\top \bar{D}^\top
\\&=& D^\top W(\bX) D \left(D ^\top W(\bX)  D\right)^{-1}  \bar{D}^\top
~=~\bar{D}^\top\,.
\end{eqnarray*}
Inserting this into the previous expected value and reverting all the previous steps allows us to obtain (using the regression assumption once more)
\begin{eqnarray*}
\E_{\widetilde{\pi}}\left[ R \frac{\left\langle\be_{A},\, \mathbf{K}(\bX)\, \bar{\be}_{\bar{\pi}(\bX)}\right\rangle }{\widetilde{\pi}(A\mid\bX)}
\right] 
&=& \E\left[\bbeta(\bX)^\top D^\top  \mathbf{K}(\bX)\, \bar{\be}_{\bar{\pi}(\bX)}\right] = \E\left[\bbeta(\bX)^\top \bar{D}^\top\, \bar{\be}_{\bar{\pi}(\bX)}\right] \\
&=&\E\left[\left(\sum_{ \bar{a} \in \bar{\cal A}}
\bff(\bar{a})^\top\, \bbeta(\bX) \bar{\be}_{\bar{a}}^\top\right) \bar{\be}_{\bar{\pi}(\bX)}\right]
=\E\left[\,\sum_{ \bar{a} \in \bar{\cal A}}
\E \left[ R \mid\bX, A = \bar{a} \right]  \mathds{1}_{\{\bar{\pi}(\bX) = \bar{a}\}}\right]\\ [0.1cm]
&=&\E\left[\E_{\bar{\pi}} \left[R \mid \bX\right] \right]
~=~\E_{\bar{\pi}} \left[ R  \right]\,.
\end{eqnarray*}
This completes the proof.
\EndProof}

\medskip

In the previous proof, we have verified the following corollary.

\begin{cor}\label{cor 1}
We have for all $\bx \in {\cal X}$
\begin{equation*}
 \mathbf{K}(\bx)^\top D =\bar{D}\,.
\end{equation*}
The column sums of $ \mathbf{K}(\bx)$ are equal to one, since the regression assumption \eqref{linear regression ansatz} contains a bias term $\beta_0(\bx)$.
\end{cor}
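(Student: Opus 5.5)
The identity $\mathbf{K}(\bx)^\top D=\bar{D}$ follows by a direct computation from the explicit form \eqref{eq:kernel_matrix}, $\mathbf{K}(\bx)=W(\bx)D\left(D^\top W(\bx)D\right)^{-1}\bar{D}^\top$. Transposing, and using that $W(\bx)$ is symmetric and hence so is $D^\top W(\bx) D$ and its inverse, we get
\begin{equation*}
\mathbf{K}(\bx)^\top = \bar{D}\left(D^\top W(\bx)D\right)^{-1}D^\top W(\bx)\,.
\end{equation*}
Multiplying on the right by $D$ gives
\begin{equation*}
\mathbf{K}(\bx)^\top D = \bar{D}\left(D^\top W(\bx)D\right)^{-1}\left(D^\top W(\bx)D\right)=\bar{D}\,.
\end{equation*}
This is the transpose of the step $D^\top\mathbf{K}(\bX)=\bar{D}^\top$ already used in the proof of Proposition~\ref{proposition 2}. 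The only facts needed are that $D$ has full column rank and $W(\bx)$ is positive definite, so that $D^\top W(\bx)D$ is invertible; both are standing assumptions.

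For the column sums, read off the first column of the identity. Because the regression assumption \eqref{linear regression ansatz} includes the intercept $\beta_0(\bx)$, the first column of $D$ is the all-ones vector $\boldsymbol{1}_d\in\R^d$, and the first column of $\bar{D}$ is $\boldsymbol{1}_m\in\R^m$. Equating first columns gives $\mathbf{K}(\bx)^\top\boldsymbol{1}_d=\boldsymbol{1}_m$. Componentwise this reads $\sum_{j=1}^d\mathbf{K}_{j,k}(\bx)=1$ for every $k=1,\ldots,m$, so every column of $\mathbf{K}(\bx)$ sums to one.

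There is no real obstacle. The only point needing care is transposing the kernel formula correctly, which relies on the symmetry of $W(\bx)$. Note also that the column-sum statement holds for any admissible symmetric positive-definite weight matrix $W(\bx)$, not only the diagonal choice \eqref{eq:diagonal_weight_matrix}.
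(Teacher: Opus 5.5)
Your proof is correct and is essentially the paper's own argument: the paper derives $D^\top \mathbf{K}(\bX)=\bar{D}^\top$ inside the proof of Proposition~\ref{proposition 2} by the same cancellation of $D^\top W(\bX) D$ against its inverse. The column-sum claim likewise comes from reading off the intercept column of ones in $D$ and $\bar{D}$, exactly as you do. As a small aside, the transpose step does not actually need $W(\bx)$ to be symmetric, since $\bigl((D^\top W D)^{-1}\bigr)^\top (D^\top W D)^\top = I$ holds whenever $D^\top W D$ is invertible.
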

This corollary shows how the actions in ${\cal A}$ are mapped to $\bar{\cal A}$ by the kernel $\mathbf{K}(\bx)$. We can also consider the special case of $\bar{\cal A}={\cal A}$. In this case, we have $\bar{D}=D$, which provides the kernel matrix
\begin{equation*}
\mathbf{K}(\bx) = (D^+_{\bx})^\top \bar{D}^\top
=W(\bx)
D \left(D ^\top W(\bx)  D\right)^{-1} D ^\top\,.
\end{equation*}
Generally, this is different from the identity matrix, but
Corollary \ref{cor 1} immediately gives
\begin{equation*}
\mathbf{K}(\bx)^\top D =D\,,
\end{equation*}
by multiplying the kernel matrix $\mathbf{K}(\bx)$ from the left with $D^\top$.

\medskip

Under $\bar{\cal A}={\cal A}$, we can also multiply from the right, providing
\begin{equation*}
\mathbf{K}(\bx) W(\bx)
D=W(\bx)
D \left(D ^\top W(\bx)  D\right)^{-1} D ^\top W(\bx)
D=W(\bx)
D\,.
\end{equation*}

In the special case $\bar{\cal A}={\cal A}$, the identities $\mathbf{K}(\bx)^\top D = D$ and $\mathbf{K}(\bx)W(x)D = W(\bx)D$ show that $\mathbf{K}(\bx)$ leaves the regression subspace spanned by $D$ invariant and thus acts like a (weighted) projection onto the subspace given by the local model $\eqref{linear regression ansatz}$. Intuitively, applying this projection to the IPS weights preserves the components needed for unbiasedness while discarding the components orthogonal to that subspace. This is precisely where the variance can be reduced.

\medskip
Similarly, Example~\ref{ex:small_example} suggests that the kernelized IPS estimator is less noisy than the IPS estimator since it uses information from all observations instead of only perfect matches. The following result introduces an upper bound for the variance of the kernelized IPS estimator relative to the IPS estimator. This upper bound is a function of the chosen weight matrix $W(\bx)$. We will use this result to determine an optimal weight matrix in the following.

\begin{proposition}[Variance bound]\label{prop:variance_bound}
For  $1\le i \le n$, let the weight matrix $W_i(\bX_i) \in \R^{d\times d}$ be any symmetric positive-definite matrix and choose the corresponding kernel matrix $\mathbf{K}_i(\bX_i)$ as in \eqref{eq:kernel_matrix}.

Denote by $\Sigma_i(\bX_i)$ the conditional covariance matrix of the per-action IPS weights with entries 
\begin{equation*}
    \Sigma_{i,jk}(\bX_i) = \operatorname{Cov}_{\widetilde{\pi}_i}\left(R_i \,\frac{\mathds{1}_{\{A_i=a^{(j)}\}}}{\widetilde\pi_i(a^{(j)}\mid \bX_i)}, \, R_i \,\frac{\mathds{1}_{\{A_i=a^{(k)}\}}}{\widetilde\pi_i(a^{(k)}\mid \bX_i)} \;\middle|\; \bX_i\right)\,.
\end{equation*}
Then, for any policy $\pi$ defined on the action space $\bar{\mathcal{A}} = \mathcal{A}$, we have
\begin{equation}\label{eq:final-mismatch}
    \Var_{\widetilde{\bpi}}\big(\widehat{V}_{\rm K}(\pi)\mid \bX_1, \dots, \bX_n\big)
    \;\le\;
    \max_{1\leq i \leq n} \left(\kappa\big(W_i(\bX_i)\,\Sigma_i(\bX_i)\big) \right)\;
    \Var_{\widetilde{\bpi}}\big(\widehat{V}_{\mathrm{IPS}}(\pi)\mid \bX_1, \dots, \bX_n\big)\,,
\end{equation}
where for a square matrix $M$, $\kappa(M) \geq 1$ denotes the spectral condition number of $M$, defined as the ratio between the maximum and the minimum eigenvalue of $M$.
\end{proposition}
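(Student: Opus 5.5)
The plan is to reduce the statement to a per-observation Loewner-order inequality and then sum over $i$. Conditional on $\bX_1,\dots,\bX_n$, the tuples are independent by assumption (i), so both conditional variances split as $\frac{1}{n^2}\sum_i$ of per-observation conditional variances. It therefore suffices to show, for each $i$,
\[
\Var\bigl(\text{$i$-th kernelized summand}\mid\bX_i\bigr)\le \kappa\bigl(W_i(\bX_i)\Sigma_i(\bX_i)\bigr)\,\Var\bigl(\text{$i$-th IPS summand}\mid\bX_i\bigr).
\]
Bounding each $\kappa_i$ by $\max_i\kappa_i$ and summing then gives \eqref{eq:final-mismatch}.

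Fix $i$ and drop indices. Let $\boldsymbol{Y}\in\R^d$ have entries $Y_j=R\,\mathds{1}_{\{A=a^{(j)}\}}/\widetilde\pi(a^{(j)}\mid\bX)$, so that $\operatorname{Cov}(\boldsymbol Y\mid\bX)=\Sigma$, and write $\be=\be_{\pi(\bX)}$. The IPS summand is $\be^\top\boldsymbol Y$, and the kernelized summand \eqref{kernelized IPS2} is $(\mathbf K\be)^\top\boldsymbol Y=\be^\top\mathbf K^\top\boldsymbol Y$. Hence the two variances are $\be^\top\Sigma\be$ and $\be^\top\mathbf K^\top\Sigma\mathbf K\be$. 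With $\bar D=D$ and $G=(D^\top WD)^{-1}$, we have $\mathbf K=WDGD^\top$.

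The key step is a sandwich in Loewner order. If $\Sigma$ is singular, then $\kappa(W\Sigma)=\infty$ by convention and the bound is trivial, so assume $\Sigma\succ0$. The matrix $W\Sigma$ is similar to $W^{1/2}\Sigma W^{1/2}\succ0$, so its eigenvalues are real and positive, with extremes $\lambda_{\min},\lambda_{\max}$. This gives
\[
\lambda_{\min}W^{-1}\preceq\Sigma\preceq\lambda_{\max}W^{-1}.
\]
From the upper bound,
\[
\mathbf K^\top\Sigma\mathbf K\preceq\lambda_{\max}\,\mathbf K^\top W^{-1}\mathbf K=\lambda_{\max}\,DGD^\top WW^{-1}WDGD^\top=\lambda_{\max}\,DGD^\top.
\]
Next, $W^{-1}-DGD^\top=W^{-1/2}(I-Q)W^{-1/2}$ with $Q=W^{1/2}DGD^\top W^{1/2}$. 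Here $Q$ is a symmetric idempotent, i.e.\ the orthogonal projection onto the column space of $W^{1/2}D$. So $I-Q\succeq0$ and $DGD^\top\preceq W^{-1}$. Combining this with the lower sandwich bound $W^{-1}\preceq\Sigma/\lambda_{\min}$ yields
\[
\mathbf K^\top\Sigma\mathbf K\preceq(\lambda_{\max}/\lambda_{\min})\,\Sigma=\kappa(W\Sigma)\,\Sigma.
\]
Evaluating this at $\be$ gives the per-observation inequality.

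The main obstacle is recognizing that $\mathbf K^\top$ is the $W$-orthogonal projection $DGD^\top W$ onto the regression subspace. This is what makes $\mathbf K^\top W^{-1}\mathbf K$ collapse to $DGD^\top\preceq W^{-1}$. The one remaining care point is to justify that $W\Sigma$ has a positive real spectrum, so that $\kappa$ is meaningful; the similarity to $W^{1/2}\Sigma W^{1/2}$ handles this. The observation-dependent $W_i$ and $\Sigma_i$ cause no trouble, because the bound is applied termwise before summing.
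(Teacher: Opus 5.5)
Your proof is correct and is essentially the paper's argument: the same per-observation reduction, the same orthogonal projector $P=W^{1/2}D(D^\top WD)^{-1}D^\top W^{1/2}$, and the same extreme eigenvalues of $W\Sigma$. Your Loewner chain $\mathbf{K}^\top\Sigma\mathbf{K}\preceq\lambda_{\max}D(D^\top WD)^{-1}D^\top\preceq\lambda_{\max}W^{-1}\preceq\kappa(W\Sigma)\,\Sigma$ is the matrix-inequality form of the paper's submultiplicative bound $\lVert\Sigma^{-1/2}W^{-1/2}\rVert_2^2\,\lVert P\rVert_2^2\,\lVert W^{1/2}\Sigma^{1/2}\rVert_2^2$, and it has the mild advantage of making the positive real spectrum of $W\Sigma$ explicit.

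One small caveat concerns your singular-$\Sigma$ remark. If some $\Sigma_i$ is singular and the IPS conditional variance is zero, the right-hand side is $\infty\cdot 0$ rather than trivially infinite, so the bound is not automatic there. The paper also sidesteps this degenerate case by tacitly assuming $\Sigma_i\succ 0$.
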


{\Beweis
{\bf Proof.}
Recall that both estimators can be written as averages of $n$ independent summands. Conditionally on the features $\bX_1,\dots,\bX_n$, the tuples $(\bX_i,A_i,R_i)$ are independent, and therefore the summands in $\widehat{V}_{\mathrm{IPS}}(\pi)$ and $\widehat{V}_{\rm K}(\pi)$ are independent as well. Hence, we obtain
\begin{align*}
    \Var_{\widetilde{\bpi}}\big(\widehat{V}_{\mathrm{IPS}}(\pi)\mid \bX_1,\dots,\bX_n\big)
    &= \frac{1}{n^2}\sum_{i=1}^n 
    \Var_{\widetilde{\pi}_i}\!\left(
        R_i\frac{\mathds{1}_{\{A_i=\pi(\bX_i)\}}}{\widetilde{\pi}_i(A_i\mid\bX_i)}
        \,\middle|\, \bX_i
    \right),\\
    \Var_{\widetilde{\bpi}}\big(\widehat{V}_{\rm K}(\pi)\mid \bX_1,\dots,\bX_n\big)
    &= \frac{1}{n^2}\sum_{i=1}^n 
    \Var_{\widetilde{\pi}_i}\!\left(
        R_i\frac{\langle \be_{A_i},\mathbf K_i(\bX_i)\be_{\pi(\bX_i)}\rangle}{\widetilde{\pi}_i(A_i\mid\bX_i)}
        \,\middle|\, \bX_i
    \right).
\end{align*}

Thus, we can treat every instance separately. Therefore, similarly to the previous proof, we consider a single tuple $(\bX,A,R)$ where the action $A$ is determined by a policy $\widetilde{\pi}$, which leads to a kernel matrix $\mathbf{K}(\bX)$.

Let $\bY\in\mathbb{R}^d$ denote the vector of per-action IPS weights with entries
\begin{equation*}
    Y_j ~=~ R\,\frac{\mathds{1}_{\{A=a^{(j)}\}}}{\widetilde{\pi}(a^{(j)}\mid\bX)}\,.
\end{equation*}

Then, we can write
\begin{equation*}
    \widehat{V}_{\mathrm{IPS}}(\pi) ~=~ \be_{\pi(\bX)}^\top \bY \qquad \text{and}
    \qquad
    \widehat{V}_{\rm K}(\pi) ~=~ \be_{\pi(\bX)}^\top(\mathbf K(\bX)^\top\bY)\,.
\end{equation*}

We have defined $\Sigma(\bX)$ as the conditional covariance matrix of $\bY$. Then
\begin{equation*}
    \operatorname{Var}_{\widetilde{\pi}}\big(\widehat{V}_K(\pi)\mid \bX\big)
    = \operatorname{Var}_{\widetilde{\pi}}\big(\be_{\pi(\bX)}^\top \mathbf{K}(\bX)^\top \bY\mid \bX\big) = \be_{\pi(\bX)}^\top \mathbf{K}(\bX)^\top\,\Sigma(\bX)\,\mathbf{K}(\bX) \be_{\pi(\bX)}\,,
\end{equation*}
and analogously
\begin{equation*}
\operatorname{Var}_{\widetilde{\pi}}\big(\widehat{V}_{\mathrm{IPS}}(\pi)\mid \bX\big)
= \be_{\pi(\bX)}^\top \Sigma(\bX) \be_{\pi(\bX)}\,.
\end{equation*}

Now, set $Z=\Sigma(\bX)^{1/2}\be_{\pi(\bX)}$. Then
\begin{align*}
    \Var_{\widetilde{\pi}}(\widehat{V}_{\rm K}\mid \bX)
    &= Z^\top\big(\Sigma(\bX)^{-1/2}\mathbf{K}(\bX)^\top\Sigma(\bX) \mathbf{K}(\bX)\Sigma(\bX)^{-1/2}\big)Z\\
    &= \big\lVert(\Sigma(\bX)^{-1/2}\mathbf{K}(\bX)^\top\Sigma(\bX)^{1/2})\,Z\big\rVert_2^2
    \le \lVert\Sigma(\bX)^{-1/2}\mathbf{K}(\bX)^\top\Sigma(\bX)^{1/2}\rVert_2^2\,\lVert Z\rVert_2^2,
\end{align*}
and $\lVert Z\rVert_2^2=\Var_{\widetilde{\pi}}(\widehat{V}_{\mathrm{IPS}}\mid \bX)$.

Factor $\mathbf{K}(\bX)^\top$ as $\mathbf{K}(\bX)^\top=W(\bX)^{-1/2}P(\bX)W(\bX)^{1/2}$ with
\begin{equation*}
P(\bX)=W(\bX)^{1/2}D\big(D^\top W(\bX) D\big)^{-1}D^\top W(\bX)^{1/2},
\end{equation*}
which is an ordinary Euclidean projector and hence $\lVert P(\bX)\rVert_2=1$ \citep[see, e.g.,][Chapter 12]{Harville1997}.\,
Then
\begin{align*}
    \lVert\Sigma(\bX)^{-1/2}\mathbf{K}(\bX)^\top\Sigma(\bX)^{1/2}\rVert_2^2
    &= \bigl\lVert(\Sigma(\bX)^{-1/2}W(\bX)^{-1/2})\,P(\bX)\,(W(\bX)^{1/2}\Sigma(\bX)^{1/2})\bigr\rVert_2^2\\
    & \le \lVert\Sigma(\bX)^{-1/2}W(\bX)^{-1/2}\rVert_2^2 \,\lVert W(\bX)^{1/2}\Sigma(\bX)^{1/2}\rVert_2^2.
\end{align*}
Since
$\lVert W(\bX)^{-1/2}\Sigma(\bX)^{-1/2}\rVert_2^2 = \lambda_{\max}(\Sigma(\bX)^{-1} W(\bX)^{-1}) = 1/\lambda_{\min}(W(\bX)\Sigma(\bX))$ is the inverse of the minimum eigenvalue of $W(\bX)\Sigma(\bX)$ and $\lVert\Sigma(\bX)^{1/2}W(\bX)^{1/2}\rVert_2^2=\lambda_{\max}(W(\bX)\Sigma(\bX))$ its maximum eigenvalue, their product equals $\kappa(W(\bX)\Sigma(\bX))$. Now, aggregating over several tuples $(\bX_i, A_i, R_i)$, $1\leq i \leq n$, yields \eqref{eq:final-mismatch}.
\EndProof}\\

Note that if we choose $W_i(\bX_i)=\Sigma_i(\bX_i)^{-1}$, $1\leq i \leq n$, we have $\kappa(W_i(\bX_i)\Sigma_i(\bX_i))=1$, which motivates the following corollary.

\begin{cor}\label{cor:lowest_bound}
    Let the weight matrices $W_i(\bX_i) = \Sigma_i(\bX_i)^{-1}$, $1\leq i \leq n$, equal the inverse of the conditional covariance matrix as defined in Proposition~\ref{prop:variance_bound}. Under the regression assumption \eqref{linear regression ansatz}, we have
    \begin{equation*}
        \operatorname{Var}_{\widetilde{\bpi}}\big(\widehat{V}_{\rm K}(\pi)\big)  ~ \le~ \operatorname{Var}_{\widetilde{\bpi}}\big(\widehat{V}_{\mathrm{IPS}}(\pi)\big)\,,
    \end{equation*}
    for any policy $\pi$ defined on the action space $\bar{\mathcal{A}} = \mathcal{A}$.
\end{cor}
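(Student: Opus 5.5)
The plan is to pass from the conditional statement of Proposition~\ref{prop:variance_bound} to the unconditional one using the law of total variance, conditioning on the features $\bX_1,\dots,\bX_n$. For either estimator $\widehat{V}\in\{\widehat{V}_{\rm K}(\pi),\widehat{V}_{\rm IPS}(\pi)\}$ I will write
\begin{equation*}
\Var_{\widetilde{\bpi}}\big(\widehat{V}\big)
=\E\Big[\Var_{\widetilde{\bpi}}\big(\widehat{V}\mid \bX_1,\dots,\bX_n\big)\Big]
+\Var\Big(\E_{\widetilde{\bpi}}\big[\widehat{V}\mid \bX_1,\dots,\bX_n\big]\Big).
\end{equation*}
I will then compare the two terms separately for the kernelized and the classical IPS estimator.

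\emph{First term.} With the choice $W_i(\bX_i)=\Sigma_i(\bX_i)^{-1}$ we have $W_i(\bX_i)\Sigma_i(\bX_i)=I_d$, so $\kappa\big(W_i(\bX_i)\Sigma_i(\bX_i)\big)=1$ for every $i$. Proposition~\ref{prop:variance_bound} then gives the almost sure inequality
\begin{equation*}
\Var_{\widetilde{\bpi}}\big(\widehat{V}_{\rm K}(\pi)\mid \bX_1,\dots,\bX_n\big)\le \Var_{\widetilde{\bpi}}\big(\widehat{V}_{\rm IPS}(\pi)\mid \bX_1,\dots,\bX_n\big).
\end{equation*}
Taking expectations preserves this inequality. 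I will add one remark here. For $W_i$ to be an admissible symmetric positive-definite weight matrix, we need $\Sigma_i(\bX_i)$ to be positive-definite, i.e.\ invertible. I will state this as the standing (implicit) assumption behind the corollary, since otherwise $\Sigma_i^{-1}$ is undefined.

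\emph{Second term.} This is where the regression assumption \eqref{linear regression ansatz} enters, and it is the step that needs care. I will show that the conditional means of the two estimators coincide. To do so, I will re-run the computation in the proof of Proposition~\ref{proposition 2} summand by summand, but without the outer expectation over $\bX$. The identity $D^\top\mathbf{K}_i(\bX_i)=D$ (Corollary~\ref{cor 1} with $\bar D=D$) holds for any symmetric positive-definite $W_i$, and in particular for $\Sigma_i^{-1}$. Hence that computation yields
\begin{equation*}
\E_{\widetilde{\pi}_i}\Big[R_i\,\tfrac{\langle \be_{A_i},\mathbf{K}_i(\bX_i)\be_{\pi(\bX_i)}\rangle}{\widetilde{\pi}_i(A_i\mid\bX_i)}\,\Big|\,\bX_i\Big]=\bff(\pi(\bX_i))^\top\bbeta(\bX_i)=\varrho(\bX_i,\pi(\bX_i)).
\end{equation*}
The analogous calculation for the IPS summand, using \eqref{eq:importance_weighting} conditionally on $\bX_i$, gives the same value $\varrho(\bX_i,\pi(\bX_i))$. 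By independence (assumption (i)), each summand depends on $(\bX_1,\dots,\bX_n)$ only through $\bX_i$. Therefore both conditional means equal $\frac1n\sum_i\varrho(\bX_i,\pi(\bX_i))$, and the second terms are identical.

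Adding the two comparisons yields the claim. I expect the main (mild) obstacle to be the second step. One has to make sure that unbiasedness holds conditionally on the features, not just marginally as stated in Proposition~\ref{proposition 2}. One also has to confirm that the kernel matrix built from the data-dependent weight $\Sigma_i(\bX_i)^{-1}$ is still a function of $\bX_i$ only. It is, since $\Sigma_i$ is a conditional covariance given $\bX_i$. This keeps the conditional-mean argument valid.
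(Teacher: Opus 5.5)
Your proposal is correct and follows the paper's proof. Both use the law of total variance conditional on $\bX_1,\dots,\bX_n$, bound the first term via Proposition~\ref{prop:variance_bound} with $\kappa\big(W_i(\bX_i)\Sigma_i(\bX_i)\big)=1$, and note that the second term is identical because both estimators are conditionally unbiased. Your remarks on the invertibility of $\Sigma_i(\bX_i)$ and on unbiasedness holding conditionally on the features make explicit what the paper leaves implicit.

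One small slip: with $\bar{D}=D$, Corollary~\ref{cor 1} gives $D^\top\mathbf{K}_i(\bX_i)=D^\top$ (equivalently $\mathbf{K}_i(\bX_i)^\top D=D$). Your version $D^\top\mathbf{K}_i(\bX_i)=D$ is dimensionally inconsistent.
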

{\Beweis
{\bf Proof.} 
By the law of total variance,
\begin{equation*}
\operatorname{Var}_{\widetilde{\bpi}}\big(\widehat{V}_{M}(\pi)\big)
= \E\left[\operatorname{Var}_{\widetilde{\bpi}}\big(\widehat{V}_{M}(\pi)\mid \bX_1, \dots, \bX_n\big)\right]
+ \operatorname{Var}\left(\E_{\widetilde{\bpi}}\big[\widehat{V}_{M}(\pi)\mid \bX_1, \dots, \bX_n\big]\right),
\end{equation*}
for $M \in \{ K, \text{IPS} \}$.
The second term is identical for $\widehat{V}_K$ and $\widehat{V}_{\mathrm{IPS}}$ since both estimators are conditionally unbiased, see the proofs of Propositions~\ref{prop:unbiased_IPS} and \ref{proposition 2}. Now the result follows immediately with Proposition~\ref{prop:variance_bound}.
\EndProof}\\

We have just demonstrated that choosing the weight matrices $W_i(\bX_i) = \Sigma_i(\bX_i)^{-1}$, $1\leq i\leq n$, ensures that the kernelized IPS estimator reduces the variance relative to the IPS estimator. Additionally, it yields the lowest upper bound in \eqref{eq:final-mismatch} among all choices for the weight matrices $W_i(\bX_i)$. In fact, the following proposition shows that this choice minimizes the variance among all symmetric positive-definite weight matrices $W_i(\bX_i) \in \R^{d\times d}$.

\begin{proposition}[Minimal variance]\label{prop:min_variance}
    Let the conditional covariance matrix $\Sigma_i(\bX_i)$, $1\leq i\leq n$, be defined as in Proposition~\ref{prop:variance_bound}.
    Let 
    \begin{equation*}
        \mathbf{K}_i^\ast(\bX_i) ~=~ \Sigma_i(\bX_i)^{-1} D\big(D^\top \Sigma_i(\bX_i)^{-1} D\big)^{-1}\bar{D}^\top \,.
    \end{equation*}
    If the regression assumption \eqref{linear regression ansatz} holds, then we have, for any policy $\bar{\pi}$ defined on the action space $\bar{\mathcal{A}}$,
    \begin{equation*}
        \operatorname{Var}_{\widetilde{\bpi}}\big(\widehat{V}_{\mathrm{K}^\ast}(\bar{\pi})\big) ~\le~  \operatorname{Var}_{\widetilde{\bpi}}\big(\widehat{V}_{\widetilde{\mathrm{K}}}(\bar{\pi})\big)\,,
    \end{equation*}
    for any symmetric positive-definite weight matrix $\widetilde{W}_i(\bX_i)\in \R^{d\times d}$ leading to kernel matrices
    \begin{equation*}
        \widetilde{\mathbf{K}}_i(\bX_i) ~=~ \widetilde{W}_i(\bX_i) D\big(D^\top \widetilde{W}_i(\bX_i) D\big)^{-1}\bar{D}^\top\,.
    \end{equation*}
\end{proposition}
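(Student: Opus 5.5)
This is a Gauss–Markov (generalized least squares) argument carried out observation by observation. As in the proof of Proposition~\ref{prop:variance_bound}, I first use the law of total variance as in Corollary~\ref{cor:lowest_bound}:
\begin{equation*}
\operatorname{Var}_{\widetilde{\bpi}}\big(\widehat V_{\rm K}(\bar\pi)\big)
=\E\big[\operatorname{Var}_{\widetilde{\bpi}}(\widehat V_{\rm K}(\bar\pi)\mid \bX_1,\dots,\bX_n)\big]
+\operatorname{Var}\big(\E_{\widetilde{\bpi}}[\widehat V_{\rm K}(\bar\pi)\mid \bX_1,\dots,\bX_n]\big).
\end{equation*}
By the proof of Proposition~\ref{proposition 2}, under \eqref{linear regression ansatz} the conditional mean of each summand given $\bX_i$ equals $\bff(\bar\pi(\bX_i))^\top\bbeta(\bX_i)$ for every admissible kernel. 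So the second term does not depend on the weight matrix. Conditional independence then reduces the first term to $n^{-2}\sum_i$ of single-observation conditional variances. It therefore suffices to show, for one tuple $(\bX,A,R)$ with $\bar{\be}:=\bar{\be}_{\bar\pi(\bX)}$, $\Sigma:=\Sigma(\bX)$ and $\bY$ the vector of per-action IPS weights,
\begin{equation*}
\bar{\be}^\top \mathbf{K}^{\ast\top}\Sigma\,\mathbf{K}^\ast\bar{\be}\;\le\;\bar{\be}^\top \widetilde{\mathbf{K}}^{\top}\Sigma\,\widetilde{\mathbf{K}}\bar{\be}.
\end{equation*}

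The key observation is that the vectors $\bw:=\mathbf{K}\bar{\be}\in\R^d$ produced by any admissible kernel all satisfy the same linear constraint $D^\top\bw=\bar D^\top\bar{\be}=\bff(\bar\pi(\bX))$, by Corollary~\ref{cor 1}. Write $\bw^\ast=\mathbf{K}^\ast\bar{\be}$ and $\widetilde\bw=\widetilde{\mathbf{K}}\bar{\be}$, and set $\boldsymbol\delta=\widetilde\bw-\bw^\ast$. Then $D^\top\boldsymbol\delta=\b0$. Expanding gives
\begin{equation*}
\widetilde\bw^\top\Sigma\widetilde\bw=\bw^{\ast\top}\Sigma\bw^\ast+2\,\boldsymbol\delta^\top\Sigma\bw^\ast+\boldsymbol\delta^\top\Sigma\boldsymbol\delta.
\end{equation*}
The cross term vanishes because $\Sigma\bw^\ast=D(D^\top\Sigma^{-1}D)^{-1}\bar D^\top\bar{\be}$ lies in the column space of $D$, so
\begin{equation*}
\boldsymbol\delta^\top\Sigma\bw^\ast=(D^\top\boldsymbol\delta)^\top(D^\top\Sigma^{-1}D)^{-1}\bar D^\top\bar{\be}=0.
\end{equation*}
Since $\Sigma$ is positive semidefinite, $\boldsymbol\delta^\top\Sigma\boldsymbol\delta\ge0$, which proves the inequality. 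Averaging over $i$ and taking expectations over the features completes the argument. In fact $\bw^\ast$ is the minimum-$\Sigma$-norm solution of the constraint.

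The main obstacle is that $\Sigma$ must be invertible for $\mathbf{K}^\ast$ to be defined. I will assume this implicitly, as the statement does. A subtler point is checking that the unbiasedness argument, and hence the equality of the between-$\bX$ variance terms, applies to $\mathbf{K}^\ast$. It does, because Proposition~\ref{proposition 2} only uses $D^\top\mathbf{K}=\bar D^\top$, which holds for every symmetric positive-definite weight matrix, and $\Sigma^{-1}$ is one. I would also note that the conditional variance formula $\bw^\top\Sigma\bw$ requires only $\bw$ to be $\bX$-measurable, which holds here.
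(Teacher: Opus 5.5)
Your proof is correct. It shares the paper's reduction: the law of total variance, then per-observation conditional variances $\omega^\top\Sigma(\bX)\omega$ with $\omega=\mathbf{K}(\bX)\bar{\be}_{\bar\pi(\bX)}$, and the constraint $D^\top\omega=\bar D^\top\bar{\be}_{\bar\pi(\bX)}$ from Corollary~\ref{cor 1}.

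The difference is in how you solve the constrained quadratic problem. The paper uses Lagrange multipliers: it writes the first-order condition $\Sigma\omega=D\lambda$, solves for $\lambda$ from the constraint, and recovers $\omega=\mathbf{K}^\ast\bar{\be}_{\bar\pi(\bX)}$. This is constructive, since it shows how $\mathbf{K}^\ast$ arises without guessing it. Strictly, though, it only identifies the stationary point. That this point is the global minimiser rests on the unstated convexity of $\omega\mapsto\omega^\top\Sigma\omega$, i.e.\ on $\Sigma\succeq 0$.

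You instead verify the candidate directly by completing the square. Because $\Sigma\bw^\ast$ lies in the column space of $D$ and $D^\top\boldsymbol{\delta}=\b0$, the cross term vanishes. The excess variance is then exactly $\boldsymbol{\delta}^\top\Sigma\boldsymbol{\delta}\ge 0$. This gives the inequality without any optimality conditions and makes the gap explicit, which also yields uniqueness of the minimiser when $\Sigma$ is positive definite. It is the standard Aitken/Gauss--Markov argument.

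Both arguments actually prove more than stated. They show that $\mathbf{K}^\ast\bar{\be}_{\bar\pi(\bX)}$ minimises the conditional variance over all $\omega$ satisfying the linear constraint, not only over those generated by symmetric positive-definite weight matrices.

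Keep your check that $\mathbf{K}^\ast$ is itself admissible, because $\Sigma^{-1}$ is symmetric positive definite once $\Sigma$ is invertible. This is what makes the between-$\bX$ variance terms coincide, and the paper leaves it implicit.
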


{\Beweis
{\bf Proof.}
As in the proof of Proposition~\ref{prop:variance_bound}, we introduce a single tuple $(\bX, A, R)$, with some policy $\widetilde{\pi}$ determining the distribution of the action $A$ and leading to a kernel matrix $\mathbf{K}(\bX)$.
As argued in Corollary~\ref{cor:lowest_bound}, it is sufficient to minimize the conditional variance of the kernelized IPS estimator, given $\bX$. Recall that
\begin{equation*}
    \operatorname{Var}_{\widetilde{\pi}}\big(\widehat{V}_K(\bar{\pi})\mid \bX\big)
    = \be_{\bar{\pi}(\bX)}^\top \mathbf{K}(\bX)^\top\,\Sigma(\bX)\,\mathbf{K}(\bX) \be_{\bar{\pi}(\bX)} \,,
\end{equation*}
and set $\omega = \mathbf{K}(\bX)\be_{\bar{\pi}(\bX)}$. Then Corollary~\ref{cor 1} yields 
\begin{equation*}
    D^\top \omega = \bar{D}^\top \be_{\bar{\pi}(\bX)}\,.
\end{equation*}
Thus, we determine the weight $\omega \in \R^d$ which minimizes the conditional variance under this constraint via the method of Lagrange multipliers
\begin{equation}
    \min_{\omega \in \R^d} L(\omega, \lambda) = \min_{\omega \in \R^d} \omega^\top \Sigma(\bX) \omega - \lambda^\top(D^\top \omega - \bar{D}^\top \be_{\bar{\pi}(\bX)})\,,
\end{equation}
for some $\lambda \in \R^{q+1}$. The following derivation is standard and included for completeness.
The first order condition in $\omega$ gives
\begin{equation*}
    0 = \frac{\partial L}{\partial \omega} = \Sigma(\bX) \omega - D\lambda \implies \omega = \Sigma(\bX)^{-1} D \lambda.
\end{equation*}
Similarly, considering $\lambda$, yields the constraint
\begin{equation*}
    \bar{D}^\top \be_{\bar{\pi}(\bX)} = D^\top \omega = D^\top \Sigma(\bX)^{-1} D \lambda\,.
\end{equation*}
We solve for $\lambda$ and find
\begin{equation*}
    \lambda = (D^\top \Sigma(\bX)^{-1} D)^{-1} \bar{D}^\top \be_{\bar{\pi}(\bX)}\,,
\end{equation*}
and therefore
\begin{equation*}
    \omega = \Sigma(\bX)^{-1} D \lambda = \Sigma(\bX)^{-1} D (D^\top \Sigma(\bX)^{-1} D)^{-1} \bar{D}^\top \be_{\bar{\pi}(\bX)} = \mathbf{K}^\ast \be_{\bar{\pi}(\bX)}\,. 
\end{equation*}
This is precisely our kernelized IPS estimator with weight matrix $W(\bX) = \Sigma^{-1}(\bX)$.
\EndProof}\\

We will refer to the kernelized IPS estimator corresponding to the choice $W_i(\bX_i)=\Sigma_i(\bX_i)^{-1}$, $1\leq i\leq n$, as the \textit{variance-optimal kernelized IPS estimator}. This is in contrast to the naive approach \eqref{eq:diagonal_weight_matrix}.
To use the previous result in practice, we require an explicit description of the conditional covariance matrix, given the policyholder features $\bX$. The next proposition provides these conditional moments.

\begin{proposition}[Entries of the covariance matrix]\label{prop:cond_moments}
    Consider a single tuple of policyholder features, action, and reward $(\bX, A, R)$. Assume that the action $A$ follows a policy $\widetilde{\pi}$.
    For $\bx \in \mathcal{X}$, define
    \begin{equation*}
        \mu_j(\bX)~=~\E[R\mid \bX,A=a^{(j)}]\,,\qquad 
        \sigma_j^2(\bX)~=~\operatorname{Var}(R\mid \bX,A=a^{(j)})\,.
    \end{equation*}
    The conditional covariance matrix $\Sigma(\bX)$, as defined in Proposition~\ref{prop:variance_bound}, is given by
    \begin{equation*}
        \Sigma_{jj}(\bX) \,=\, \frac{\sigma_j^2(\bX)}{\widetilde{\pi}(a^{(j)}\mid\bX)} \,+\, \mu_j(\bX)^2\,\frac{1
        -\widetilde{\pi}(a^{(j)}\mid\bX)}{\widetilde{\pi}(a^{(j)}\mid\bX)}\,,
    \end{equation*}
    and, for $j\neq k$,
    \begin{equation*}
        \Sigma_{jk}(\bX)\,=\, -\,\mu_j(\bX)\,\mu_k(\bX)\,.
    \end{equation*}
\end{proposition}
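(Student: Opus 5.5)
We compute $\Sigma(\bX)$ straight from the definition, using the fact that every per-action weight $Y_j = R\,\mathds{1}_{\{A=a^{(j)}\}}/\widetilde{\pi}(a^{(j)}\mid\bX)$ is supported on the event $\{A=a^{(j)}\}$. Fix $\bX$ and abbreviate $p_j=\widetilde{\pi}(a^{(j)}\mid\bX)$. First we compute the conditional means. Conditioning on $A$ and using assumption (iv) together with the definition of $\mu_j$, we get
\begin{equation*}
\E_{\widetilde{\pi}}[Y_j\mid\bX] = \sum_{a\in\mathcal{A}} \widetilde{\pi}(a\mid\bX)\,\E[R\mid\bX,A=a]\,\frac{\mathds{1}_{\{a=a^{(j)}\}}}{p_j} = \mu_j(\bX).
\end{equation*}
This is the same computation as in \eqref{eq:importance_weighting}, specialised to a single action.

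Next we compute the second moments. The indicators satisfy $\mathds{1}_{\{A=a^{(j)}\}}\mathds{1}_{\{A=a^{(k)}\}}=\mathds{1}_{\{A=a^{(j)}\}}\delta_{jk}$. For $j\neq k$ this gives $Y_jY_k=0$ almost surely, so
\begin{equation*}
\Sigma_{jk}(\bX)=0-\mu_j(\bX)\mu_k(\bX).
\end{equation*}
For $j=k$ we have $Y_j^2=R^2\mathds{1}_{\{A=a^{(j)}\}}/p_j^2$. Conditioning on $A$ again gives
\begin{equation*}
\E_{\widetilde{\pi}}[Y_j^2\mid\bX]=p_j\,\E[R^2\mid\bX,A=a^{(j)}]/p_j^2=(\sigma_j^2(\bX)+\mu_j(\bX)^2)/p_j.
\end{equation*}
Subtracting $\mu_j(\bX)^2$ yields
\begin{equation*}
\Sigma_{jj}(\bX)=\frac{\sigma_j^2(\bX)}{p_j}+\mu_j(\bX)^2\,\frac{1-p_j}{p_j},
\end{equation*}
which is the claimed formula.

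There is no real obstacle here. The only points needing care are the implicit assumptions that $p_j>0$ for every $j$ (overlap, so that the weights are well defined) and that $R$ has finite conditional second moments, so that $\sigma_j^2$ exists. We would state both assumptions explicitly at the start. We would also remark that the tower-property step relies on the conditional law of $R$ given $(\bX,A)$ being the one defining $\mu_j$ and $\sigma_j^2$, which is exactly what assumption (iv) guarantees.
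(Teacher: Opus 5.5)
Your proposal is correct and follows essentially the same route as the paper. Both compute $\E_{\widetilde{\pi}}[Y_j\mid\bX]=\mu_j(\bX)$ and $\E_{\widetilde{\pi}}[Y_j^2\mid\bX]=(\sigma_j^2(\bX)+\mu_j(\bX)^2)/\widetilde{\pi}(a^{(j)}\mid\bX)$ by conditioning on $A$, and use that indicators of distinct actions have zero product, so the off-diagonal covariances reduce to $-\mu_j(\bX)\mu_k(\bX)$; your explicit statement of the overlap and finite-second-moment assumptions is a sensible addition that the paper leaves implicit.
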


{\Beweis
{\bf Proof.} 
Let $\bY\in\mathbb{R}^d$, denote the vector of per-action IPS weights with entries
\begin{equation*}
    Y_{j} ~=~ R \,\frac{\mathds{1}_{\{A=a^{(j)}\}}}{\widetilde\pi(a^{(j)}\mid \bX)}\,.
\end{equation*}

First, for each $j \in \{1, \dots, d\}$, we find
\begin{equation*}
    \E_{\widetilde{\pi}}[Y_j \mid \bX]
    ~=~ \E_{\widetilde{\pi}}\left[R\frac{\mathds{1}_{\{A=a^{(j)}\}}}{\widetilde{\pi}(a^{(j)}\mid\bX)} \;\middle|\; \bX\right]
    = \frac{\widetilde{\pi}(a^{(j)}\mid\bX)}{\widetilde{\pi}(a^{(j)}\mid\bX)}\,\E[R \mid \bX, A=a^{(j)}]
    = \mu_j(\bX).
\end{equation*}
Next, we determine
\begin{eqnarray*}
    \E_{\widetilde{\pi}}[Y_j^2\mid \bX]
    &=& \E_{\widetilde{\pi}}\left[R^2\frac{\mathds{1}_{\{A=a^{(j)}\}}}{\widetilde{\pi}(a^{(j)}\mid\bX)^2}\;\middle|\; \bX\right]
    = \frac{\widetilde{\pi}(a^{(j)}\mid\bX)}{\widetilde{\pi}(a^{(j)}\mid\bX)^2}\,\E[R^2\mid \bX, A=a^{(j)}]\\
    &=& \frac{\E[R^2\mid \bX,A=a^{(j)}]}{\widetilde{\pi}(a^{(j)}\mid\bX)}
    = \frac{\sigma_j(\bX)^2 + \mu_j(\bX)^2}{\widetilde{\pi}(a^{(j)}\mid\bX)}\,,
\end{eqnarray*}
hence
\begin{equation*}
    \Sigma_{jj}(\bX)=\operatorname{Var}_{\widetilde{\pi}}(Y_j\mid \bX)=\E_{\widetilde{\pi}}[Y_j^2\mid \bX]-\big(\E_{\widetilde{\pi}}[Y_j\mid \bX]\big)^2
    = \frac{\sigma_j(\bX)^2 + \mu_j(\bX)^2}{\widetilde{\pi}(a^{(j)}\mid\bX)} - \mu_j(\bX)^2\,.
\end{equation*}
For $j\neq k$, $1\leq j, k \leq d$ since at most one action is taken,
$\mathds{1}_{\{A=a^{(j)}\}}\mathds{1}_{\{A=a^{(k)}\}}=0$ almost surely, so
\begin{equation*}
    \E_{\widetilde{\pi}}[Y_j Y_k\mid \bX]
    = \E_{\widetilde{\pi}}\left[R^2\,\frac{\mathds{1}_{\{A=a^{(j)}\}}\mathds{1}_{\{A=a^{(k)}\}}}{\widetilde{\pi}(a^{(j)}\mid\bX)\, \widetilde{\pi}(a^{(k)}\mid\bX)}\;\middle|\; \bX\right]
    = 0\,,
\end{equation*}
and therefore
\begin{equation*}
    \Sigma_{jk}=\operatorname{Cov}_{\widetilde{\pi}}(Y_j,Y_k\mid \bX)=\E_{\widetilde{\pi}}[Y_j Y_k\mid \bX]-\E_{\widetilde{\pi}}[Y_j\mid \bX]\E_{\widetilde{\pi}}[Y_k\mid \bX]
    = -\,\mu_j(\bX)\,\mu_k(\bX)\,.
\end{equation*}
\EndProof}\\

The previous results imply that achieving the variance-optimal kernelized IPS estimator requires certain conditional moments of the reward, which in practice must be estimated. One option is to approximate these moments using a regression model, similar to the direct method \eqref{regression direct model}. Importantly, however, the kernelized IPS estimator does not rely on the global model in the same way as the direct method. Its unbiasedness relies only on the local regression assumption \eqref{linear regression ansatz} used to construct the kernel matrix $\mathbf{K}(\bx)$, not on a correct specification of a full regression model in the features $\bx \in \mathcal{X}$. Misspecifying the weight matrix $W(\bx)$ can still affect efficiency, since the variance reduction relative to the IPS estimator is no longer guaranteed. Still, Proposition~\ref{prop:variance_bound} provides a bound to the variance. In Section~\ref{sec:numerical_part}, we show that, in practice, relying on the naive kernel matrix corresponding to the weight matrix defined in \eqref{eq:diagonal_weight_matrix} may be sufficient.

\section{Computation of optimal policy} \label{sec:policy_optimization}
In Sections~\ref{sec:estimation} and \ref{sec:kernelized_controls}, we described methods to determine the value of a new policy $\bar{\pi}$, given an observed learning sample recorded under a different policy $\widetilde{\pi}$. 
It remains to approximate the optimal policy $\bar{\pi}^*$, see \eqref{optimal control}. A distinctive feature of the off-policy framework is that the entire optimisation is performed offline on a fixed learning sample $\mathcal{L}_{\widetilde{\bpi}}$. This has two important practical consequences. First, any number of candidate policies can be evaluated and compared without additional data collection: the IPS and kernelized IPS estimators reuse the same experimental observations for every counterfactual evaluation, enabling exhaustive search over policy parameterisations at no marginal experimental cost. Second, the reliability of the selected policy can be verified on held-out data. Specifically, by partitioning the learning sample into a training set (used for policy optimisation) and a validation set (used for value estimation), we obtain an unbiased estimate of the optimised policy's value that is free of the selection bias inherent in evaluating a policy on the same data used to choose it. This train/validation split is analogous to cross-validation in supervised learning and provides a principled safeguard against overfitting to noise in the reward estimates.
In the following, we describe two alternatives.

\paragraph{Data-shared Lasso.}
The following method has been proposed by \cite{SP}, and it is based on \cite{Gross2016}.
We rewrite the kernelized IPS estimator as follows
\begin{equation*}
    \widehat{V}_{\rm K}(\bar{\pi}) 
    = \frac{1}{n}\sum_{i=1}^n R_i\, \frac{\left\langle\be_{A_i},\, \mathbf{K}_i(\bX_i)\, \bar{\be}_{\bar{\pi}(\bX_i)}\right\rangle }{\widetilde{\pi}_i(A_i\mid\bX_i)}
    = \frac{1}{n}\sum_{i=1}^n
    \sum_{\bar{a} \in \bar{\cal A}}
    \mathds{1}_{\{\bar{\pi}(\bX_i) = \bar{a}\}} \, V_{i,\bar{a}}\,,
\end{equation*}
where we define
\begin{equation*}
V_{i,\bar{a}} = R_i\, \frac{\left\langle\be_{A_i},\, \mathbf{K}_i(\bX_i)\, \bar{\be}_{\bar{a}}\right\rangle }{\widetilde{\pi}_i(A_i\mid\bX_i)}\,.
\end{equation*}
The goal is to remove the explicit dependence on $(A_i,R_i)$ from $V_{i,\bar{a}}$ by again considering a regression setting (i.e., we want to remove this dependence by exploiting a conditional expectation). This yields a policy optimization problem that depends only on the covariates $\bX_i$, allowing us to directly construct a decision rule. For this, $V_{i,\bar{a}}$ are approximated by a so-called data-shared Lasso regression. This approach is based on a linear regression in the risk factors $\bx \in \mathcal{X}$ defined by the parameters $\bw_0 \in \R^p$ and with action-specific deviations defined by the parameters $\bw_{\bar{a}} \in \R^p$ for actions $\bar{a} \in \bar{\mathcal{A}}$. This yields the conditional expectation approximation
\begin{equation}\label{data-shared Lasso}
V_{i,\bar{a}} \approx \bX_i^\top \left(\widehat{\bw}_{0}+ \widehat{\bw}_{\bar{a}}\right),
\end{equation}
where we regularize the two sets of parameters. Specifically, we solve
\begin{equation*}
    (\widehat{\bw}_0, \widehat{\bw}_{\bar{a}^{(1)}}, \dots, \widehat{\bw}_{\bar{a}^{(m)}}) = \underset{\bw_0, \bw_{\bar{a}}}{\arg\min} \sum_{i=1}^n \sum_{\bar{a} \in \bar{\cal A}} \left(V_{i,\bar{a}} - \bX_i^\top \left(\bw_{0}+ \bw_{\bar{a}}\right) \right)^2 + \tau \left( \lVert \bw_{0} \rVert_1 + \sum_{\bar{a} \in \bar{\cal A}} \gamma_{\bar{a}} \lVert \bw_{\bar{a}}\rVert_1 \right)\,,
\end{equation*}
where $\tau>0$ is a global regularization parameter and $\gamma_{\bar{a}}>0$ are additional regularization parameters. For details on the choice of $\gamma_{\bar{a}}>0$, see \cite{Gross2016}; if there is an equal number of observations for each action, \cite{Gross2016} suggest choosing $\gamma_{\bar{a}} = \frac{1}{\sqrt{m}}$, where $m$ is the number of actions in the new action space $\bar{\mathcal{A}}$.

Using approximation \eqref{data-shared Lasso}, we can remove the individual instance dependence, which allows us to approximate the policy value by
\begin{equation*}
    \widetilde{V}_{\rm K}(\bar{\pi})
    = \frac{1}{n}\sum_{i=1}^n \bX_i^\top
    \sum_{\bar{a} \in \bar{\cal A}}
    \mathds{1}_{\{\bar{\pi}(\bX_i) = \bar{a}\}} \left(\widehat{\bw}_{0}+ \widehat{\bw}_{\bar{a}}\right)\,.
\end{equation*}
We then approximate the optimal policy as
\begin{equation*}
    \bar{\pi}^{\text{DSL}}(\bx) 
    ~=~\underset{\bar{a}^\ast  \in \bar{\cal A}}{\arg \max}~
    \sum_{\bar{a} \in \bar{\cal A}}
    \mathds{1}_{\{\bar{a}^\ast = \bar{a}\}} \, \bx^\top  \widehat{\bw}_{\bar{a}}\,,\quad \bx \in \mathcal{X}\,.
\end{equation*}

\paragraph{Neural network approximation.}

We consider a neural network (NN) as a flexible approximator for the optimal pricing policy. Formally, a feedforward neural network defines a recursive sequence of transformations
\begin{equation*}
z^{(0)} = \bx, \quad 
z^{(\ell)} = \sigma^{(\ell)}\left(W^{(\ell)} z^{(\ell-1)} + b^{(\ell)}\right), \quad \ell = 1, \dots, N,
\end{equation*}
where $W^{(\ell)} \in \R^{n_{\ell-1} \times n_\ell}$ and $b^{(\ell)}\in \R^{n_\ell}$ are the weight matrix and bias vector of layer~$\ell$, which consists of $n_\ell$ neurons. The function $\sigma^{(\ell)}$ denotes an element-wise activation (e.g., ReLU or $\tanh$). 
The output layer $z^{(N)}$ is assumed to be a vector in $\mathbb{R}^m$, where $m$ was the number of actions in $\bar{\mathcal{A}}$.
The corresponding network policy $\bar{\pi}_\theta$ is then defined via a softmax activation on the output layer
\begin{equation*}
\bar{\pi}_\theta(\bar{a}^{(i)}\mid\bx) 
= 
\frac{\exp\left(z^{(N)}_{i}(\bx,\theta)\right)}
{\sum_{j = 1}^m \exp\left(z^{(N)}_{j}(\bx,\theta)\right)}\,, \quad 1 \leq i \leq m\,,
\end{equation*}
where $\theta = \{W^{(\ell)}, b^{(\ell)}\}_{\ell=1}^N$ denotes all trainable network parameters. 
The softmax transformation ensures that $\bar{\pi}_\theta(\cdot\mid\bx)$ defines a stochastic control on the new action space $\bar{\mathcal{A}}$.

The policy network is trained to maximize the estimated policy value
\begin{equation*}
\theta^* = \arg\max_\theta \, \widehat{V}_{\mathrm{K}}(\bar{\pi}_\theta)\,,
\end{equation*}
where $\widehat{V}_{\mathrm{K}}(\bar{\pi}_\theta)$ is the kernelized IPS estimator defined in Section~\ref{sec:kernelized_controls}, see formula \eqref{kernelized IPS2}.\\
Note that, so far, we have only defined the kernelized IPS estimator for deterministic policies. However, it naturally extends to stochastic policies by replacing the one-hot encoding vector $\bar{\be}_{\bar{\pi}(\bx)}$ with a vector of action probabilities $(\pi(\bar{a}^{(1)} \mid \bx), \dots, \pi(\bar{a}^{(m)} \mid \bx))^\top \in \R^m$ in \eqref{kernelized IPS2}.
\\
In practice, the maximization is performed using stochastic gradient ascent (or an adaptive optimizer such as Adam) on mini-batches of the learning sample $\mathcal{L}_{\widetilde{\bpi}}$.

Overall, this yields an adaptive policy that maps individual features~$\bx$ to optimal pricing actions.

In addition, since the output is a vector, $\bar{\pi}_\theta(\bar{a}\mid\bx)$ can be interpreted either as a stochastic policy or, by taking
\begin{equation*}
    \bar{\pi}^{\text{NN}}(\bx) = \arg\max_{\bar{a} \in \bar{\mathcal{A}}} \bar{\pi}_\theta(\bar{a}\mid\bx)\,,
\end{equation*}
as a deterministic control.\\

\section{Application to insurance pricing}\label{sec:numerical_part}
Consider an insurer offering travel insurance alongside airline tickets on a price comparison website. For each potential customer, the insurer observes characteristics $\bX$, such as the ticket price, travel duration, and booking lead time. Based on this information, the insurer chooses a pricing action $A = A(\bX)$, which adjusts the loading applied to a fair premium $P_{\mathrm{fair}}$ according to
\begin{equation*}
    P(\bX, A) = P_{\mathrm{fair}}(\bX)\big(1 + (1 + A)\,\lambda\big)\,,
\end{equation*}
where $\lambda > 0$ is the default profit loading. The customer then decides whether to purchase the insurance at price $P(\bX, A)$. If the contract is accepted, the insurer's reward $R$ is the resulting expected profit
\begin{equation*}
    R = P(\bX, A) - P_{\mathrm{fair}}(\bX)\,.
\end{equation*}
Otherwise, the profit is zero. One could also define the reward as the realized profit by replacing the fair premium with realized claims. Here, however, we assume that the idiosyncratic claims risk is diversified across the portfolio. Thus, the insurer’s reward depends both on the chosen premium and the customer’s price sensitivity. Our goal is to learn, from historical data, a pricing rule that maps customer characteristics to premium adjustments to maximize expected profit.

\subsection{Synthetic data generation}
\label{sec:data-generation}

To evaluate and compare the different off-policy value estimation and pricing optimization methods
introduced in Sections~\ref{sec:estimation}--\ref{sec:policy_optimization}, we generate a large-scale
synthetic data set that closely mimics a realistic travel insurance pricing environment.
Using synthetic data allows us to (i) control the underlying behavioral response to prices,
(ii) define a known ground truth for expected rewards, enabling objective benchmarking,
and (iii) ensure that the policy $\widetilde{\pi}$ is fully observed.

We simulate the learning sample
\begin{equation*}
    \mathcal{L}_{\widetilde{\bpi}} = (\bX_i, A_i, R_i)_{i=1}^n\,,
\end{equation*}
with $n$ observations.
All observations are generated under a single known, uniform policy, i.e., $\widetilde{\pi}_i = \widetilde{\pi}$, $1\leq i\leq n$, on the finite action space
\begin{equation*}
    \mathcal{A} = \{-20\%, -10\%, 0\%, +10\%, +20\%\}\,,
    \qquad \widetilde{\pi}(a \mid \bx) = \frac{1}{5}\,, \quad a \in \mathcal{A}, \;\bx \in \mathcal{X}\,.
\end{equation*}
The actions represent an increase/decrease of the default profit loading $\lambda$.

\paragraph{Covariates.}

For each policyholder $i\in \{1, \dots, n\}$, we generate a feature vector $\bX_i\in \R^p$
where the components follow the sampling rules in Table~\ref{tab:covariates}. We assume that all covariates are mutually independent. In the following sections, we will treat the country of destination as a latent variable. We intentionally exclude the destination variable to introduce controlled model misspecification, allowing us to study the robustness of different off-policy estimators.

\begin{table}[htbp]
    \centering
    \begin{tabular}{lll}
    \toprule
    Variable & Support & Sampling rule \\
    \midrule
    Ticket price & $[100,2000]$ & continuous uniform \\
    Lead time & $\{1,\dots,365\}$ & discrete uniform \\
    Number of Passengers & $\{1,\dots,5\}$ & discrete uniform \\
    Country of origin & 7 categories & discrete uniform \\
    Country of destination & 7 categories & discrete uniform \\
    Return trip & $\{\text{True},\text{False}\}$ & discrete uniform \\
    Trip duration & $\{1,\dots,30\}$ & discrete uniform\\
    \bottomrule
    \end{tabular}
    \caption{List of covariates for the travel insurance pricing environment.}
    \label{tab:covariates}
\end{table}

\paragraph{Premium adjustment.}
We assume that the insurer determines a baseline (actuarially fair) premium. For simplicity, we set this as a fixed share of the displayed ticket price $T_i$ (first component of $\bX_i$), i.e.,
\begin{equation*}
    P_{\mathrm{fair}}(\bX_i) = 0.1 \; T_i\,.
\end{equation*}
On top of this fair premium, the insurer charges a profit margin. We assume that, by default, this loading is given by a constant factor $\lambda > 0$. The pricing decision of the insurer then consists of adjusting this loading depending on the characteristics of the policyholder. This adjustment is captured by the action $A_i$, which scales the profit loading up or down.

More precisely, the charged premium $P(\bX_i, A_i)$ is given by
\begin{equation*}
    P(\bX_i, A_i) = P_{\mathrm{fair}}(\bX_i)\bigl(1 + (1 + A_i)\,\lambda\bigr)\,,
\end{equation*}
where $A_i < 0$ corresponds to a reduction and $A_i > 0$ to an increase in the loading relative to the default level $\lambda = 5\%$.

\paragraph{Conversion model.}
Customer decisions are driven by the offered premium. In particular, we assume that the probability of purchasing the insurance depends on the charged premium $P(\bX_i, A_i)$ through a price-sensitive demand model.

We model the conversion probability as a function of the policyholder characteristics $\bX_i$ and the pricing action $A_i$. Specifically, the policyholder is assumed to accept the offer with probability $p(\bX_i, A_i)$, which depends on an individual local price elasticity $E(\bX_i)$ via
\begin{equation*}
    p(\bX_i,A_i) = \min\left(1, \sigma\left(\bX_i^\top \balpha_1\right)
    \left( 1 + E(\bX_i)\,A_i\right)\right)\,,
\end{equation*}
for some weight vector $\balpha_1 \in \R^p$ and the logistic sigmoid function $\sigma$. The truncation at $1$ ensures valid probabilities and introduces mild nonlinearities near the boundaries of the action space. For $A_i = 0$, the baseline conversion probability for the premium 
\begin{equation*}
    P(\bX_i, 0) = P_{\mathrm{fair}}(\bX_i)(1 + \lambda)
\end{equation*}
is given by the term $\sigma\left(\bX_i^\top \balpha_1\right)$.
The local price elasticity $E(\bX_i)$ captures how sensitive the policyholder is to price changes, with more negative values corresponding to higher price sensitivity. It is defined as
\begin{equation*} \label{eq:elasticity}
    E(\bx) = - \min\left( \exp(\bx^\top \balpha_2 + h(\bx)), 4 \right)\,,\quad \bx \in \mathcal{X}\,,
\end{equation*} 
for a weight vector $\balpha_2\in \R^p$. Note that $E(\bx) \le 0$ ensures that demand decreases as prices increase. The function $h(\bx)$ collects higher-order terms in the covariates and is defined as
\begin{equation}\label{eq:higher_order_terms}
    h(\bx) =
    \balpha_3^\top
    \begin{pmatrix}
        x_1^3 \\
        x_1 x_2 \\
        x_1 x_3 \\
        x_3 x_6
    \end{pmatrix}\,,
\end{equation}
where $x_1$ denotes the ticket price, $x_2$ the lead time, $x_3$ the number of passengers, and $x_6$ the return-trip indicator, while $\balpha_3 \in \R^4$ is another weight vector.
We will investigate the effect of this higher-order term $h(\bx)$ on the performance of different estimators and policy optimization approaches in the following.

Let $C_i$ denote the conversion outcome for policyholder $\bX_i$, which equals $1$ if the policyholder purchases the insurance and $0$ otherwise. It is the realization of a Bernoulli variable with probability $p(\bX_i, A_i)$. 

\paragraph{Rewards.}

Given the conversion outcome $C_i$, the reward $R_i$ from policyholder $\bX_i$ for the insurer corresponds to the expected profit. That is, if the policyholder accepts the offer, the insurer expects to earn the difference between the charged premium and the fair premium, and otherwise, the reward is zero. Hence,
\begin{equation*}
    R_i = C_i \bigl(P(\bX_i, A_i) - P_{\mathrm{fair}}(\bX_i)\bigr)
    = C_i\, P_{\mathrm{fair}}(\bX_i)\,(1 + A_i)\,\lambda\,.
\end{equation*}

For each observation $i$, we additionally compute the expected rewards for all possible actions $a\in \mathcal{A}$,
\begin{equation} \label{eq:true_expected_rewards}
    \varrho(\bX_i, a) = p(\bX_i, a)\, P_{\mathrm{fair}}(\bX_i)\,(1 + a)\,\lambda\,.
\end{equation}
This information is used for counterfactually validating new policies. Note that the premium for policyholder $i$ is linear in the action $A_i$. The same holds approximately for the conversion probability $p(\bX_i, A_i)$, which is, however, capped at 1. Thus, the expected reward is approximately quadratic in the action $A_i$. We will use this information in the construction of our kernelized IPS estimator. For sufficiently large discounts or surcharges, however, the truncations introduce nonlinearities that break this quadratic relationship, which will affect extrapolation accuracy, see Section~\ref{sec:extrapolation}, below.

\subsection{Predict-then-optimize} \label{sec:PTO}
As a baseline for optimal pricing, we consider a classical predict-then-optimize (PTO) strategy, see, e.g., \cite{Chen2022} and \cite{Elmachtoub2022}. In this approach, the demand or conversion behavior is first estimated using a parametric regression model, and the optimal pricing decision is obtained by maximizing the resulting predicted reward. In contrast to the two methods introduced in Section~\ref{sec:policy_optimization}, this approach relies on an explicit structural model for customer response.\\

We model the conditional conversion probability $p(\bx,a)$ for $\bx \in \mathcal{X}$ and
$a \in \mathcal{A}$ using a logistic regression model. Specifically, we assume that the
conversion indicator satisfies
\begin{equation*}
    C \mid_{\bX=\bx, A=a}\, \sim \text{Bernoulli}(p(\bx,a)).
\end{equation*}
The conversion probability is approximated by
\begin{equation*}
    \widehat p(\bx,a) = g^{-1}\!\big(\eta(\bx,a)\big),
\end{equation*}
where $g : (0,1) \to \mathbb{R}$ denotes the logit link function
\begin{equation*}
    g(\mu) = \log\!\left(\frac{\mu}{1-\mu}\right).
\end{equation*}
We consider a score $\eta$ that contains a quadratic influence in the actions, i.e. 
\begin{equation*}
    \eta(\bx,a)
    =
    \boldsymbol{\phi}_1^\top
    \begin{pmatrix}
        1 \\
        a \\
        a^2
    \end{pmatrix}
    +
    \boldsymbol{\phi}_2^\top \bx 
    +
    \boldsymbol{\phi}_3 \, a\, \bx^\top \,,
\end{equation*}
where $\boldsymbol{\phi}_1 \in \mathbb{R}^3$ represents the polynomial effect of the action, $\boldsymbol{\phi}_2 \in \mathbb{R}^p$ captures the main effects of the covariates, and $\boldsymbol{\phi}_3 \in \mathbb{R}^p$ represents action-covariate interactions.
Note that this regressor does not capture the higher-order interactions $h(\bx)$, as selected in \eqref{eq:higher_order_terms}.

Given an estimate $\widehat{p}(\bx,a)$ of the conversion probability, the expected reward of performing action $a$ is approximated by the DM estimator
\begin{equation*}
    \widehat{\varrho}^{\,\text{DM}}(\bx,a)
    = \widehat{p}(\bx,a)\big(P(\bx,a) - P_{\mathrm{fair}}(\bx)\big) \,,
\end{equation*}
where $P(\bx,a)$ denotes the adjusted premium associated with action $a \in \mathcal{A}$. 
While the historical data are generated on the original action space $\mathcal{A}$, the policy optimization procedures introduced below may operate on a new action space $\bar{\mathcal{A}}$. This requires that the expected reward function $\varrho(x,a)$ extends to actions outside the observed set.
The resulting policy is obtained by selecting the action that provides the maximal estimated reward, i.e.,
\begin{equation*}
\bar{\pi}^{\text{PTO}}(\bx)
~=~
\underset{\bar{a} \in \bar{\mathcal{A}}}{\arg\max}\;
\widehat{\varrho}^{\,\text{DM}}(\bx,\bar{a})\,.
\end{equation*}
The advantage of this two-stage procedure lies in its interpretability and computational simplicity, as standard actuarial tools such as GLMs can be used to model the demand. However, the performance depends on the accuracy of the specified regression model. Model misspecification may lead to biased reward estimates and, therefore, to suboptimal pricing decisions. Crucially, this may propagate to the estimated policy value if the same demand model is reused for estimating the value
\begin{equation*}
    \widehat{V}_{\rm DM} (\bar{\pi}^{\text{PTO}}) =  \frac{1}{n} \sum_{i=1}^n \underset{\bar{a} \in \bar{\mathcal{A}}}{\max}\;
    \widehat{\varrho}^{\,\text{DM}}(\bX_i,\bar{a})\,.
\end{equation*}
We will return to this issue in Section~\ref{subsec:policy_optimization}.
The PTO approach serves as a natural benchmark for the kernelized off-policy optimization methods introduced in Section~\ref{sec:policy_optimization} and studied below.

\subsection{Variance reduction of the kernelized IPS estimator} 

A primary motivation for introducing the kernelized IPS estimator $\widehat V_{\mathrm{K}}$ is variance reduction compared to the classical IPS estimator $\widehat V_{\mathrm{IPS}}$, see Propositions~\ref{prop:variance_bound}--\ref{prop:min_variance}. This section empirically verifies these theoretical results.

We consider a fixed deterministic policy $\pi_0$ corresponding to the constant action $a=0$, i.e., $\pi_0(\bx)=0$ for all $\bx\in\mathcal X$. For varying sample sizes $n$, we generate learning samples $\mathcal L_{\widetilde{\bpi}}$ and evaluate the estimators on these samples. Since the synthetic data generation process provides access to the true expected rewards $\varrho(\bx,a)$ via \eqref{eq:true_expected_rewards} for all policyholder features $\bx\in\mathcal X$ and actions $a\in\mathcal A$, the empirical policy value $\widehat{V}_n(\pi)$ is known, which allows us to compute the root mean square error (RMSE) of the estimators directly.

The kernel matrix is constructed according to Section~\ref{sec:kernelized_controls} using the regression assumption~\eqref{linear regression ansatz} with a quadratic basis in the action variable, i.e., $q=2$ with $f_1(a)=a$ and $f_2(a)=a^2$, $a\in\mathcal A$. This choice reflects the structure of the simulation environment, where the expected rewards are approximately quadratic in the actions as discussed in Section~\ref{sec:data-generation}.

The left panel of Figure~\ref{fig:variance_reduction} illustrates the empirical RMSE of the DM estimator $\widehat V_{\mathrm{DM}}$, the classical IPS estimator $\widehat V_{\mathrm{IPS}}$, and the variance-optimal kernelized IPS estimator $\widehat V_{\mathrm{K}}$ as a function of the sample size when evaluating the constant policy $\pi_0$. We observe that the kernelized IPS estimator exhibits substantially lower RMSE than the IPS estimator across all sample sizes. This behavior empirically confirms Corollary~\ref{cor:lowest_bound}, which predicts a variance reduction since the kernel aggregates information across neighboring actions. In contrast, the IPS estimator only uses observations for which the realized action $A_i$ matches the action chosen by a target policy $\pi(\bX_i)$, $1\leq i \leq n$, which leads to a higher variability. The kernelized IPS estimator achieves approximately the same RMSE as the IPS estimator does with twice the number of samples. At the same time, beyond a sample size of $n = 50\,000$, the RMSE of the DM estimator does not significantly decrease further. This behavior is driven by misspecification of the global regression model $\widehat{\varrho}^{\,\mathrm{DM}}$. In particular, the model fails to capture the higher-order interaction term $h(\bx)$ and omits the flight destination, which acts as a latent variable. As a result, the estimated rewards are systematically biased.

In addition to the variance-optimal kernel derived in Proposition~\ref{prop:min_variance}, we also consider a naive kernelized IPS estimator based on the diagonal weight matrix~\eqref{eq:diagonal_weight_matrix}. The comparison between the naive and the variance-optimal kernelized IPS estimators is shown in the right panel of Figure~\ref{fig:variance_reduction}. Each point corresponds to one simulation run with a fixed sample size $n = 500\,000$, and the two axes display the corresponding kernelized IPS estimates obtained under the two kernel constructions, each for the value of the constant policy $\pi_0$. The estimates lie almost exactly on the diagonal, indicating that both approaches produce essentially identical results in this setting.

Note that, conditional on conversion, the reward associated with an action $a \in \mathcal{A}$ is given by
\begin{equation*}
    P_{\mathrm{fair}}(\bX_i)\,(1 + a)\,\lambda\,.
\end{equation*}
Thus, the effect of the action on the reward is relatively moderate in our setting, as it enters only through the multiplicative factor $(1+a)$. In particular, the variation across actions is limited to a narrow range between $0.8$ and $1.2$ in our specification. As a consequence, differences in rewards across actions are relatively small, and the benefit of using a variance-optimal kernel over a simpler (naive) kernel is less pronounced. In settings where the action has a stronger impact on the reward, we expect the variance reduction achieved by the optimal kernel to be more substantial.

From a computational perspective, replacing the variance-optimal kernel matrix with the naive kernel matrix is particularly relevant. The naive kernelized IPS estimator avoids the need to estimate conditional reward moments or covariance structures when constructing the kernel matrix. In contrast, constructing the variance-optimal kernel requires estimating the conditional covariance matrix $\Sigma(\bx)$ and computing a policyholder-specific kernel matrix, which involves forming the matrix $D^\top \Sigma(\bx)^{-1}D$ and its inversion.
The naive kernel, on the other hand, allows reusing the same matrix across policyholders sharing identical action propensities and therefore provides a substantial scalability advantage in large-scale applications.

\begin{figure}[htbp]
	\centering
    \includegraphics[width=0.98\textwidth]{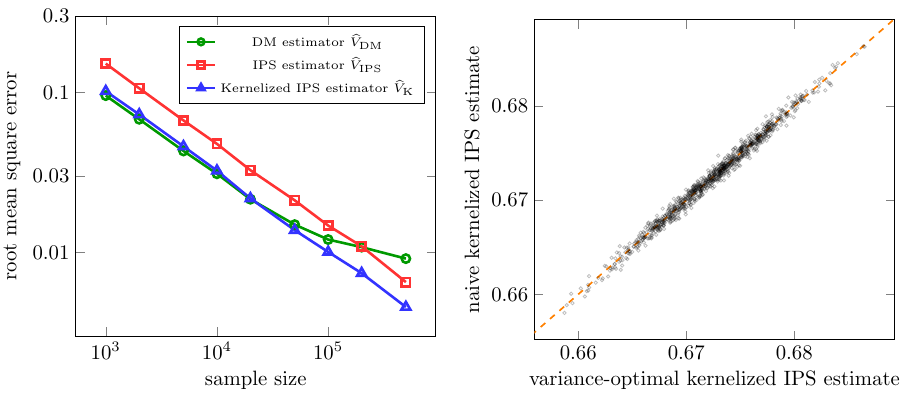}
    \caption{Left panel: Empirical RMSE of the IPS estimator, the variance-optimal kernelized IPS estimator, and the DM estimator for the constant policy $\pi_0(\bx) = 0$, shown as a function of the sample size on the log–log scale.\\
    Right panel: Scatter plot of the naive kernelized IPS estimates versus the variance-optimal kernelized IPS estimates, each for the value of the constant policy $\pi_0(\bx) = 0$, across different simulations with $n=500\,000$ samples each. The orange dashed line represents the identity line, indicating equality between the two estimators.}
    \label{fig:variance_reduction}
\end{figure}

\subsection{Extrapolation with the kernelized IPS estimator} \label{sec:extrapolation}

A further advantage of the kernelized IPS estimator is its ability to evaluate policies defined on action spaces that extend beyond the actions observed in the historical data. Recall that the learning sample is generated on the action space
\begin{equation*}
    \mathcal{A}=\{-20\%, -10\%, 0\%, +10\%, +20\%\}\,.
\end{equation*}
In practice, however, insurers may wish to evaluate pricing strategies that involve stronger discounts or surcharges, or a finer resolution of possible price adjustments.

To study this behavior, we consider an extended action space defined on a dense grid,
\begin{equation*}
    \bar{\mathcal{A}} = \{-30\%, -29\%, -28\%, \ldots, 28\%, 29\%, 30\%\}\,,
\end{equation*}
which contains actions both outside the historical range and intermediate adjustments that were not offered under policy $\widetilde{\pi}$. This grid can be viewed as a discretized approximation of a continuous pricing adjustment interval.

The kernelized IPS estimator introduced in Section~\ref{sec:kernelized_controls} naturally accommodates such settings. Because the kernel matrix is constructed using a regression structure in the action variable, it allows information from the observed actions to be smoothly extrapolated to nearby actions in the extended space. In contrast, the classical IPS estimator cannot evaluate policies on $\bar{\mathcal{A}}$, since it requires exact matches between the actions taken by the target policy and those observed in the historical data.

For each action $\bar{a} \in \bar{\mathcal{A}}$, we compute the value of the constant policy $\bar\pi_{\bar{a}}(\bx)=\bar{a}$ using the kernelized IPS estimator based on learning samples of size $n=100\,000$. Since the synthetic data generation process provides access to the true expected rewards $\varrho(\bx,\bar{a})$ in \eqref{eq:true_expected_rewards}, we again measure accuracy via the RMSE across repeated simulations.

Figure~\ref{fig:extrapolation} displays the empirical RMSE of the kernelized IPS estimator for different constant policies. The estimator remains stable across the central region of the action space and produces reliable estimates even for actions slightly outside the historical support. However, the estimation error increases more noticeably when moving further beyond the interval $[-20\%,20\%]$.

This behavior can be explained by the structure of the data generation process described in Section~\ref{sec:data-generation}. Recall that the premium $P(\bx,a)$ is linear in the action $a$, while the purchase probability
\begin{equation*}
    p(\bx,a)=\min\!\left(1,\;\sigma\!\left(\bx^\top\balpha_1\right)(1+E(\bx)a)\right)
\end{equation*}
is approximately linear in $a$ only when the truncation at $1$ is inactive. As discussed in Section~\ref{sec:data-generation}, this truncation introduces nonlinearities in the reward structure once the actions become sufficiently large in magnitude. Consequently, the expected reward $\varrho(\bx,a)$ deviates from the approximately quadratic relationship in $a$ that motivates the kernel construction in Section~\ref{sec:kernelized_controls}.

As a result, extrapolation becomes less accurate for large discounts or surcharges, where these caps are more likely to be active. Nevertheless, the results show that the kernelized framework can effectively interpolate between observed actions and extrapolate moderately beyond them, allowing the evaluation of pricing strategies on a much finer and broader set of possible price adjustments.

\begin{figure}[htbp]
	\centering
    \includegraphics[width=0.6\textwidth]{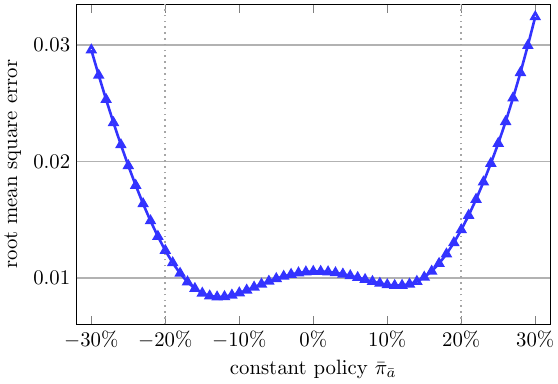}
    \caption{Empirical RMSE of the kernelized IPS estimator for constant policies $\bar\pi_{\bar a}(\bx)=\bar a$, $\bar a \in \bar{\mathcal{A}}$, across different simulations of sample size $n=100\,000$.}
    \label{fig:extrapolation}
\end{figure}

\subsection{Policy optimization} \label{subsec:policy_optimization}

We now turn to the empirical performance of the policy optimization methods introduced in Section ~\ref{sec:policy_optimization}. In particular, we compare the data-shared Lasso (DSL) and the neural network (NN) approaches to the predict-then-optimize (PTO) benchmark.

\paragraph{Implementation of data-shared Lasso and neural network policies.}

We implement both the DSL and the NN policies using standardized input features, with categorical variables encoded via one-hot representations. The neural network architecture consists of two hidden layers with $32$ units each and $\mathrm{ReLu}$ activation functions.

Hyperparameters are tuned via automated search over predefined ranges. For DSL, this includes the regularization parameter and interaction penalty. For the NN, we tune network depth, width, learning rate, and batch size using randomized search.

\paragraph{Benchmark.}

Since the simulation environment provides the expected reward function $\varrho$, we determine the optimal policy
\begin{equation*}
    \bar{\pi}^\ast(\bx) = \underset{\bar{a} \in \bar{\mathcal{A}}}{\arg\max} \, \varrho(\bx,\bar{a})\,.
\end{equation*}
We compute its empirical policy value $\widehat{V}_n(\bar{\pi}^\ast)$, as defined in \eqref{eq:empirical_value}, on a \textit{reference data set} consisting of $n=1\,000\,000$ policyholders.

We generate 20 learning samples $\mathcal{L}_{\widetilde{\bpi}}$, containing $n=1\,000\,000$ observations each, used to fit 20 policies $\bar\pi$ per method. The training procedure of the neural network is stochastic, and results depend on the initialization of the trainable network parameters $\theta$. This is why for every learning sample, we fit 5 networks with different random initializations. Ultimately, we select the network and the training epoch that produces the highest kernelized IPS estimate on a subset of the learning sample that has not been used for training. This use of a held-out evaluation set is a direct benefit of the offline framework: because the kernelized IPS estimator is unbiased and can be computed on any subset of data collected under the logging policy, it provides a reliable, unbiased assessment of each candidate policy's performance without requiring deployment.

For each resulting policy $\bar\pi$, we evaluate its performance through the relative gap in policy value on the reference data set with respect to the optimal policy,
\begin{equation*}
    \frac{\widehat{V}_n(\bar\pi)-\widehat{V}_n(\bar\pi^\ast)}{\widehat{V}_n(\bar\pi^\ast)}\,,
\end{equation*}
Since $\widehat{V}_n(\bar\pi^\ast)$ represents the maximal achievable expected reward, this quantity measures the loss in expected reward relative to the optimal pricing strategy.

\paragraph{Results.} The top left panel in Figure~\ref{fig:relative_gap} illustrates the empirical relative gap to the optimal policy value $\widehat{V}_n(\bar{\pi}^\ast)$ for the policy determined via DSL, for the NN policy, and for the GLM-based PTO policy. Each gray bar shows the average gap obtained over all learning samples, and the error bars represent one standard deviation across these runs.

All approaches produce policies whose values are close to the optimal benchmark. The NN policy consistently achieves the smallest performance gap. This can be attributed to the ability of the NN to capture the higher-order interactions $h(\bx)$ that influence the expected reward in our simulation environment. In contrast to the NN policy, the PTO approach and the DSL policy are more sensitive to model misspecification since they are restricted to (generalized) linear relations in the features $\bx \in \mathcal{X}$. Overall, the DSL approach provides an interpretable linear structure with action-specific deviations, while the NN offers greater flexibility at the cost of reduced interpretability and substantially higher computational effort.

As described in Section~\ref{sec:PTO}, if we use the DM estimator to estimate the value of the PTO policy via
\begin{equation*}
    \widehat{V}_{\rm DM} (\bar{\pi}^{\text{PTO}}) =  \frac{1}{n} \sum_{i=1}^n \underset{\bar{a} \in \bar{\mathcal{A}}}{\max}\;
    \widehat{\varrho}^{\,\text{DM}}(\bX_i,\bar{a})\,,
\end{equation*}
we tend to overestimate the true value. This is because the estimated expected rewards $\widehat{\varrho}^{\,\text{DM}}$ are used for determining the PTO policy as well as estimating its value. Since this estimator is inevitably noisy, the maximization step introduces a selection bias, which favors actions with positive estimation errors and thereby leads to an upward bias in the estimated policy value. This effect can be seen in the bottom left panel of Figure~\ref{fig:relative_gap}.

\begin{figure}[htbp]
    \centering
    \includegraphics[width = 0.98\textwidth]{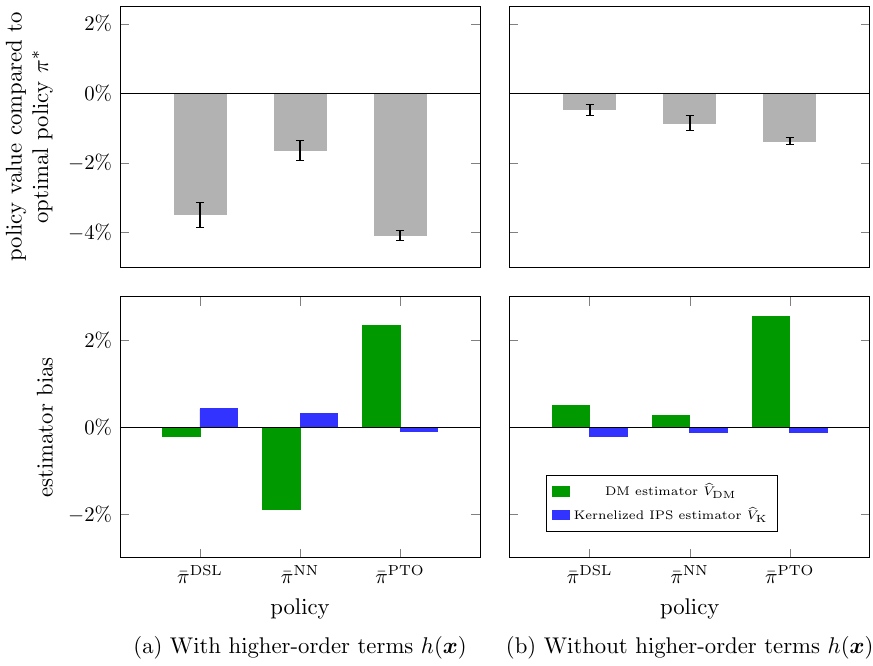}
    \caption{Top row: Mean relative gap in policy value to the optimal policy $\pi^\ast$ for the learned policies across 20 repeated samples of the training data. Gray bars show the mean gap, and black error bars indicate one standard deviation.\\
    Bottom row: Bias of the policy value estimators for the learned policies. Values above zero indicate overestimation of policy performance, while values below zero indicate underestimation.\\
    In the left column, the data-generating process includes the higher-order interaction term $h(\bx)$ in the price elasticity, as selected in \eqref{eq:higher_order_terms}, whereas in the right column this term is excluded.}
    \label{fig:relative_gap}
\end{figure}

To assess the sensitivity of the results to the complexity of the data generation process, we repeated the numerical experiments using a simplified specification without the higher-order interaction term $h(\bx)$ in the elasticity model \eqref{eq:elasticity}. In this setting, the expected reward is closer to the linear structure in the features $\bx \in \mathcal{X}$ implicitly assumed by the DSL and PTO policies. As anticipated, the performance differences between the methods become negligible as shown in the right panels of Figure~\ref{fig:relative_gap}. Crucially, however, the DM estimator overestimates the policy value of the PTO policy even when no higher-order term $h(\bx)$ is included. The kernelized IPS estimator only shows little noise for all policies. These results strongly suggest that the expected reward estimator $\widehat{\varrho}^{\,\text{DM}}$ used to calibrate the PTO policy should not be chosen to estimate its value.

\section{Conclusion}

This paper studies insurance pricing from the perspective of off-policy evaluation and control. By viewing pricing decisions as actions taken on policyholders with observable characteristics, we connect demand-sensitive pricing problems with tools from reinforcement learning and counterfactual inference. This perspective allows pricing strategies to be evaluated and optimized using historical data collected under previous pricing policies, without the need for new experiments.

A central contribution of the paper is the introduction of a kernelized inverse propensity score estimator that exploits structure in the action space. The estimator smooths information across neighboring actions while preserving unbiasedness under a mild regression assumption. This leads to a substantial reduction in variance relative to the classical inverse propensity score estimator and allows policies defined on new action spaces to be evaluated using historical data. We further characterize the variance-optimal kernel and discuss computationally efficient approximations that perform similarly in practice.

Building on these value estimates, we study two approaches for policy optimization: an interpretable data-shared Lasso formulation and a flexible neural network parameterization. In a controlled synthetic travel insurance environment, both approaches produce pricing rules that perform close to the optimal benchmark. The neural network policies achieve slightly higher policy values on average, while the data-shared Lasso provides a more transparent structure and performs similarly to the classical predict-then-optimize approach.

Overall, the results highlight the potential of off-policy methods for actuarial pricing problems where demand responses to price changes play an important role. The proposed framework cleanly separates experimentation from deployment: a single randomised data-collection phase supports offline evaluation of an arbitrary number of counterfactual pricing strategies, and the unbiasedness of the IPS-based estimators allows the insurer to validate the chosen policy on held-out experimental data before committing to production. This provides a principled, low-risk pathway from data collection to optimised pricing, while avoiding strong structural assumptions about customer behaviour. In addition, the kernelized inverse propensity score estimator enables interpolation between observed actions and moderate extrapolation to previously unseen price adjustments. As illustrated in our numerical experiments, the accuracy of such extrapolation naturally depends on how well the assumed local regression structure approximates the true reward function.

Several directions for future research appear promising. Extending the framework to sequential decision problems could allow insurers to study dynamic pricing strategies over the customer lifecycle. Incorporating richer behavioral models or market interactions may further improve policy performance in competitive environments. Finally, applying the proposed methods to real insurance portfolios would provide valuable insights into the practical impact of data-driven pricing strategies.

\bibliographystyle{apalike}
\bibliography{bibliography.bib}

\end{document}